\documentclass{article}
\usepackage{microtype}
\usepackage{graphicx}
\usepackage{subcaption}
\usepackage{booktabs} 
\usepackage{enumitem}

\usepackage{hyperref}

\usepackage[preprint]{icml2026}

\usepackage{amsmath}
\usepackage{amssymb}
\usepackage{mathtools}
\usepackage{amsthm}
\usepackage{bm}
\usepackage{macros}
\usepackage{nicefrac}
\usepackage{thmtools}
\usepackage{thm-restate}
\usepackage[section]{placeins}

\usepackage[capitalize,noabbrev]{cleveref}

\makeatletter
\newcommand*\bigcdot{\mathpalette\bigcdot@{.4}}
\newcommand*\bigcdot@[2]{\mathbin{\vcenter{\hbox{\scalebox{#2}{$\m@th#1\bullet$}}}}}
\makeatother

\newcommand{\mdot}[1]{\overset{\bigcdot}{#1}}
\newcommand{\twodot}[1]{\overset{\bigcdot\bigcdot}{#1}}
\newcommand{\qq}{\mathbf{q}}

\theoremstyle{plain}
\newtheorem{theorem}{Theorem}[section]

\newtheorem{lemma}[theorem]{Lemma}

\theoremstyle{definition}

\newtheorem{assumption}[theorem]{Assumption}
\theoremstyle{remark}

\newtheorem{property}[theorem]{Property}

\usepackage[textsize=tiny]{todonotes}

\hypersetup{
  breaklinks  = true,
  colorlinks   = true, 
  urlcolor     = blue, 
  linkcolor    = red, 
  citecolor   = blue 
}

\icmltitlerunning{Low-Entropy Outputs of Softmax}

\begin{document}

\twocolumn[
  \icmltitle{Gradient Flow Polarizes Softmax Outputs towards Low-Entropy Solutions}




  \icmlsetsymbol{equal}{*}

  \begin{icmlauthorlist}
      \icmlauthor{Aditya Varre}{yyy}
  \icmlauthor{Mark Rofin}{yyy}
  \icmlauthor{Nicolas Flammarion}{yyy}
  \end{icmlauthorlist}

  \icmlaffiliation{yyy}{Theory of Machine Learning Lab, EPFL}

  \icmlcorrespondingauthor{Aditya Varre}{aditya.varre@epfl.ch}

  \icmlkeywords{Machine Learning, ICML}

  \vskip 0.3in
]



\printAffiliationsAndNotice{}  

\begin{abstract}
Understanding the intricate non-convex training dynamics of softmax-based models is crucial for explaining the empirical success of transformers. In this article, we analyze the gradient flow dynamics of the \emph{value-softmax} model, defined as $\mathcal{L}(\mathbf{V} \sigma(\mathbf{a}))$, where $\mathbf{V}$ and $\mathbf{a}$ are a learnable value matrix and attention vector, respectively. As the \emph{matrix times softmax vector} parameterization constitutes the core building block of self-attention, our analysis provides direct insight into transformer's training dynamics. We reveal that gradient flow on this structure inherently drives the optimization toward solutions characterized by low-entropy outputs. We demonstrate the universality of this polarizing effect across various objectives, including logistic and square loss. Furthermore, we discuss the practical implications of these theoretical results, offering a formal mechanism for empirical phenomena such as \emph{attention sinks} and \emph{massive activations}.
\end{abstract}

\section{Introduction}

Despite the widespread adoption of large language models (LLMs), our understanding of the internal mechanisms by which they process information remains limited. In particular, the self-attention mechanism and the representations it induces play a central role in model behavior, yet many aspects of their functioning remain in the early stages of understanding. Recent work has begun to shed light on this opacity by identifying recurring learned structures within transformer architectures, such as induction heads~\citep{olsson2022context} and attention sinks~\citep{xiao2024efficient}. These components have attracted significant attention as crucial building blocks underlying complex model capabilities.

Building on these efforts, identified components are often examined in isolation to assess how architectural design choices, training algorithms and hyperparameters, as well as properties of the training data, contribute to their emergence. A theoretical understanding of these factors is a necessary step toward improving the robustness, efficiency, and safety of large language models. 
In this work, we pursue this objective by studying a broader phenomenon that underlies many of these components: \emph{low-entropy (sparse) attention patterns}, where the attention distribution concentrates most of its mass on a small number of tokens, and by asking whether this sparsification is solely enforced by the task.

\begin{figure}[t]
  \centering
  \setlength{\tabcolsep}{2pt}

  \begin{subfigure}[t]{0.31\columnwidth}
    \centering
    \includegraphics[width=\linewidth]{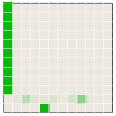}
    \caption{Softmax}
    \label{fig:attn-softmax}
  \end{subfigure}\hfill
  \begin{subfigure}[t]{0.31\columnwidth}
    \centering
    \includegraphics[width=\linewidth]{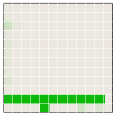}
    \caption{Sigmoid}
    \label{fig:attn-sigmoid}
  \end{subfigure}\hfill
  \begin{subfigure}[t]{0.31\columnwidth}
    \centering
    \includegraphics[width=\linewidth]{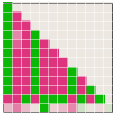}
    \caption{Linear}
    \label{fig:attn-linear}
  \end{subfigure}

  \caption{Representative attention patterns of the 2nd Transformer layer solving an induction task. Sigmoid and linear attentions are trained without additional normalization (see Table \ref{table:attention-types}). An attention sink clearly emerges for softmax but not other attentions.}
  \label{fig1}
  \vspace{-10pt}
\end{figure}



As a motivating example, we consider \emph{attention sinks}~\citep{xiao2024efficient}: sparse attention patterns characterized by attention mass that is highly concentrated on one token, often the first token in the sequence. A common hypothesis is that such low-entropy patterns are useful because concentrating mass on a small set of tokens induces a systematic bias in the attention output; however, the need for such bias is not always clear from task semantics or sequence structure. Moreover, some softmax-free attention types do not form attention sinks (Figure \ref{fig1}), pointing to the role of architecture in modulating their emergence.
This motivates us to ask the following question: 
\emph{Why does attention become sparse in the first place, and does low-entropy attention reflect a functional requirement of the task or an implicit preference induced by optimization and parameterization?}

We investigate this question by isolating the minimal computation underlying a single attention head. The output of a head can be written as a value matrix multiplied by a softmax attention distribution, i.e., $\VV\sf(\aa)$. In this form, an attention sink corresponds to $\sf(\aa)$ becoming low-entropy (near one-hot), concentrating mass on a single token. We therefore study a simplified \emph{value--softmax} model $\VV\sf(\aa)$, where both $\VV$ and $\aa$ are trainable, and analyze whether gradient-based training exhibits an \emph{implicit bias toward sparsification} even when many dense solutions exist. Concretely, we consider several losses with a fixed target vector, analyze gradient-flow dynamics, and complement the theory with empirical evidence.

Our contributions are:
\begin{itemize}[noitemsep, topsep=1pt]
\item We study the value-softmax model with logistic loss and show that among many possible solutions, the gradient flow is biased towards the sparse solution and the output of the softmax, i.e., $\sf(\aa)$ converges to a \emph{one-hot} vector. This sparsification is attributed to the polarizing dynamics of the gradient flow.
\item We extend these polarizing dynamics to the case of regression and attribute the sparsification to the speed of convergence. We further discuss how alternative nonlinearities and normalization schemes modulate (or prevent) this polarization.
%
\item We connect these results to transformer attention, arguing that attention sinks and massive activations can emerge as a consequence of the implicit bias induced by the softmax parameterization, helping explain sink formation.
\item We provide empirical evidence on how normalization, both its presence and absence can impact the proportion of attention sinks specifically in the formation of induction heads. We also discuss the pitfalls of such sparsification bias, where a dispropotionate burden of decision making is attributed to a single token.
\end{itemize}

\subsection{Related Work}

Our work lies at the intersection of mechanistic studies of attention in transformers, the emergence of low-entropy attention patterns, optimization-driven implicit bias in softmax-parameterized models, and polarization-style dynamics in gradient-flow.

\paragraph{Attention sinks and massive activations.}
\citet{xiao2024efficient} identified that large proportional of attention scores is focused on the first token and termed such tokens as attention sinks.~\citet{yu2024unveiling,sun2024massive} also identified that the attention sinks do not necessarily correspond to the first token. Explaining the phenomenon,
\citet{sun2024massive} attribute the sink phenomenon to the massive activations which function as implicit attention biases. \citet{xiao2024efficient,gu2025attention} attribute the sink phenomenon to softmax normalization. \citet{gu2025attention} also studies the impact of data distribution, optimization, loss function and architecture on the formation of attention sinks. More recently, \citet{zhang2025attention} describe 
attention sinks as a mechanism intended to cluster the hidden representations of input tokens, and \citet{barbero2025llms} conclude that they are needed to prevent representational collapse.
\citet{guo2024active} study the formation of attention sinks using a simple Markovian data model and identify that heads may attend to sinks for some but not all input domains.

Substantial line of research has employed attention sinks to achieve practical benefits, particularly improvements in
key–value cache compression~\citep{guo-etal-2024-attention}, long-context generation~\citep{xiao2024efficient,Yang_2025_ICCV}, quantization~\citep{son-etal-2024-prefixing}, and feature map smoothness~\citep{darcet2024vision}.

\paragraph{Softmax alternatives and entropy collapse.}

Numerous alternatives to softmax attention have been proposed, mostly from the perspective of improved computational complexity and parallelizability, including sparsemax \citep{martins2016softmax}, linear attention \citep{pmlr-v119-katharopoulos20a, qin2022cosformer, wortsman2023replacing}, sigmoid attention \citep{ramapuram2025theory}, and others.

It has been shown that softmax leads to much lower attention entropy than its linear alternatives \citep{hong-lee-2025-variance}. However, the impact of this difference is not entirely clear: \citet{stabilizing-transformer-training} argue that entropy collapse leads to training instability and attempt to prevent it from happening, \citet{zhang2024hedgehog} state that low-entropy attention patterns are useful and design an attention operation to preserve them.

To the end of explaining entropy collapse, \citet{deng2024sparse} show that it manifests with high sequence length due to the log-scale increase in logits. Further, \citet{bao2024self} tie attention localization to the eigenspectrum of QK matrices. 

\paragraph{Sparse attention patterns.} Independently from the works on entropy collapse, it has been shown that softmax Transformers often implement sparse solutions, driving a lot of progress in Transformer interpretability \citep{olsson2022context, lindsey2025biology}. \citet{singh2024needs, zucchet2025emergence, yuksel2025incremental, varre2025learning} study the conditions required for the emergence of sparse attention patterns. 
\citet{nanda2023factfinding} show that LLMs exhibit subcircuits for factual recall, and
\citet{zucchet2025language} analyze the evolution dynamics of attention for this task.

\paragraph{Polarization and population dynamics.} From a technical perspective, the dynamical system analysis of our value-softmax model shares significant structural similarities with replicator dynamics in evolutionary game theory~\citep{taylor1978evolutionary, smith1982evolution}; see \citet{sorin2020replicator} for a survey and opinion polarization on social networks~\citep{proskurnikov2015opinion}. In these frameworks, the evolution of a state $x \in \mathbb{R}^{p}$ is generally governed by dynamics of the form:\begin{align*}\dot{x}_i \propto ( f(x_i) - \bar{f}(x) ),\end{align*}where $\bar{f}(x)$ represents the population (weighted-)aggregate. Because the rate of change is driven by the deviation from this aggregate, parameters tend to diverge from the mean. This mechanism underpins phenomena such as \emph{survival of the fittest} in biological contexts and the hardening of polarized opinions in social dynamics.

\section{Problem Setup}

\paragraph{Notations.} For $n \in \mathbb{N}$, let $[n] := \{0,1,\ldots,n{-}1\}$. Bold letters denote quantities of interest which includes time-dependent variables and their limiting values. For a vector $x \in \mathbb{R}^d$, $x_i$ denotes its $i^{th}$ coordinate. For a matrix $X \in \mathbb{R}^{m \times n}$, $X_i$ denotes its $i^{th}$ column. The notation $\|\cdot\|$ denotes the Euclidean norm for vectors and the Frobenius norm for matrices. For a vector $a \in \mathbb{R}^d$, $\mathrm{diag}(a)$ denotes the diagonal matrix with entries given by $a$. The all-ones vector is denoted by $\one$ and its dimension is inferred from the context. For vectors $u, v \in \mathbb{R}^d$, $u \odot v$ denotes the Hadamard (elementwise) product. 

The softmax function $\mathrm{\sf} : \mathbb{R}^p \to \mathbb{R}^p$ is defined by
\begin{align*}
  \sf(a)_i = \frac{\exp(a_i)}{\sum_{j=1}^{p} \exp(a_j)}.  
\end{align*}
We first present a single self-attention block and then simplify it to a model we call the value–softmax model.

\paragraph{Self attention.} 
Let $X \in \R^{T \times d}$ be a sequence of input token embeddings and let $Q,K,V$ denote the query, key, and value matrices of the attention block. The updated representation after the attention block is given by 
\begin{align*}
  X_{+} = \sf\left( X QK^{\top} X^{\top} \right) X V^{\top}. 
\end{align*}
Let $z$ denote the embedding of the last input token. Its updated embedding is 
\begin{align*}
  z_{+}  = V X^{\top} \sf\left( X K Q^{\top} z\right) = \bar{V}^{\top} \sf\left( \bar{a} \right),
\end{align*}
where we define $\bar{V} = XV^{\top} \in \R^{T \times d}$ and $\bar{a} = X K Q^{\top} z \in \R^{T}$. 
Such an update has the following structure: a trainable matrix $\bar{V}$ multiplied by softmax of a vector $\bar{a}$. The embedding 
\begin{align*}
z_{+} = \sum_{i} v_i \sigma(a)_i 
\end{align*}
can be represented in multiple ways. For example, the scores $\sigma_{i}'s$ may be uniform, in which case the vectors  $v_i'$ average to $z_{+}$; alternatively, the scores may be sparse(e.g., one-hot) in which case a single vector $v_i$ is selected to be $z_{+}$. 

We would like to understand which of these regimes the optimization procedure tends to favor. Note that in the above derivation we ignore the contribution of the value matrix $V$ to the updates of the other tokens and assume independence of the $z_{+}$. 
While these assumptions do not hold in general, they motivate a simplified model, defined next, that provides insight into the behavior induced by the softmax parameterization.

\paragraph{A value-softmax model.}
Let $\VV \in \R^{p \times p}$ and $\aa \in \R^{p}$ denote the trainable value matrix and score vector, respectively.  Let $\LL{\VV, \aa}$ be a loss function that depends only on the product  $\bbeta = \VV \sf(\aa)$:
\begin{align}
  \LL{\VV, \aa} = \bloss\left( \VV \sf(\aa) \right) = \bloss\left( \bbeta \right).
\end{align} 
The loss does not depend on $\aa$ and $\VV$ separately but only through the combination $\VV \sf(\aa)$. This model is a stylized simplification of the self-attention block.
Re-parameterized models are a popular setting to gain insight into the training dynamics of non-convex optimization problem~\citep{lireparam2022}.
For convenience, we denote $\sos = \sf(\aa)$.  

\paragraph{Gradient flow.}
We study the dynamics of the continuous time limit of gradient descent, the gradient flow. Although this analysis does not capture discrete-time effects, adaptive step sizes, or stochasticity, it nevertheless captures the principal behavior of first-order optimization and provides useful insight into the training process. 

The gradient flow for the parameters $\VV$ and $\aa$ is given by the differential equations 
\begin{align} \label{eq:gf-L}
  \der{\VV} &= - \nabla_{\VV} \LL{\VV, \aa} \der{t},  \,\,
  \der{\aa} = - \nabla_{\aa} \LL{\VV, \aa} \der{t}.
\end{align}
We exploit the  reparameterization $\LL{\VV,\aa} = \bloss\left( \VV \sf(\aa)\right)$ to obtain explicit expressions for the gradients. Using the chain rule on $\bbeta = \VV \sf\left( \aa \right)$, the gradient flow can be written as 
\begin{align}
  \der{\VV}  &= - \nabla_{\bbeta} \bloss(\bbeta) \sos^{\top} \der{t}, \\
  \der{\aa}  &= - \left[ \diag{\sos} - \sos \sos^{\top} \right] \VV^{\top} \nabla_{\bbeta} \bloss(\bbeta).
\end{align}
The Jacobian of the softmax $\diag{\sos} - \sos \sos^{\top}$ plays a central role in shaping the dynamics of  $\aa$.

\section{Polarizing effect of softmax}
In this section, we present the main results for the value–softmax model. We begin with the logistic loss and then discuss regression losses as well as other nonlinearities and normalization functions.

\subsection{Logistic loss}
%



We consider a binary classification setting, mirroring standard transformer training where the final representation is optimized with cross-entropy (i.e., logistic loss in the binary case). 
To isolate the effect of the value–softmax parameterization, we consider a stylized teacher setting in which the desired decision rule is given by a fixed linear classifier $\bbeta_* \in \R^{p}$.
The attention block produces $\bbeta$, and the logistic loss encourages a large positive margin $\scal{\bbeta_*} {\bbeta}$, i.e., agreement with the classifier induced by $\bbeta_*$.

This setting leads to the following formulation of the loss as a function of $\VV,\aa$:
\begin{align} \label{eq:logistic-loss-V-a}
  \LL{\VV, \aa} = \log\left( 1 + \exp \scal{\bbeta_*} {\, - \VV \sf(\aa)} \, \right),
\end{align}
The gradient flow dynamics are given by 
\begin{align*}
  \der{\VV} &= - \gamma(\bbeta) \, \bbeta_* \sos^{\top} \der{t}, \\
  \der{\aa} &= - \gamma(\bbeta) \, \left[ \diag{\sos} - \sos \sos^{\top} \right] \VV^{\top} \bbeta_* \der{t}.
\end{align*}
where $\gamma(\bbeta) = \left(1 + \exp\scal{\bbeta_*}{\bbeta}\right)^{-1}$.
Letting  $\uu = \VV^{\top} \bbeta_*$, the dynamics reduced to a couples system over two vectors $\aa, \uu \in \mathbb{R}^p$ 
\begin{align} \label{eq:logistic-loss-u-a}
  \mdot{\uu} &= - \gamma(\bbeta) \, \nor{\bbeta_*}^2 \sos, \\
  \mdot{\aa} &= - \gamma(\bbeta) \, \left[ \diag{\sos} - \sos \sos^{\top} \right] \uu.
\end{align}

\paragraph{Intuition from replicator dynamics.}
Inspecting the gradient-flow equations, the evolution of the scores is driven  by the centered signal
\begin{align*}
\mdot{\aa} \sim \left( \uu  -  \scal{\uu}{\sos} \one  \right) , \quad \twodot{\uu} \sim  \left( \uu  -  \scal{\uu}{\sos} \one  \right),
\end{align*}
where we omit scaling factors and normalization terms for clarity.  

Since $\sos=\sf(\aa)$ lies on the simplex, $\langle \uu,\sos\rangle$ is the $\sos$-weighted average of the coordinates of $\uu$. Thus, the term $\uu - \langle \uu,\sos\rangle \one$ measures each coordinate's deviation from the average.
This is reminiscent of replicator dynamics~\citep{sorin2020replicator}: coordinates with above-average ``fitness'' (larger $\uui{i}$) are amplified relative to those below average, polarizing the mass towards the best-performing coordinates. Motivated by this analogy, we next analyze how softmax normalization induces such polarization and sparsity in $\sos$.


\paragraph{Repulsion between the coordinates.}
To formalize the polarizing effect, we first show that the dynamics induce repulsion between the coordinates of the attention scores $\sos$ and the value projections $\uu$. We begin by stating an assumption on the initialization. 
\begin{assumption}\label{ass:init}
  The parameters $\VV, \aa$ are initialized such that $\aa(0) = 0$ and  $\uui{0}(0) > \uui{1}(0) > \ldots > \uui{p-1}(0)$ for the projection $ \uu = \VV^{\top} \bbeta_*$.
\end{assumption}
This assumption is mild. The condition $\aa(0)=0$ implies $\sos(0)$ is uniform, which is consistent with standard practice. Without loss of generality, we assume the coordinates of $\uu(0)$ are ordered by re-indexing. The strict ordering holds almost surely under generic random initialization and can be relaxed if needed. We now show that gradient flow preserves this ordering and induces repulsion between coordinates.
\begin{restatable}{theorem}{NoCrossingLogistic} \label{thm:no-crossing-logitstic}
 Consider gradient flow on the loss \eqref{eq:logistic-loss-V-a} under the initialization in \cref{ass:init}. Then, for all $t > 0$,
  \begin{enumerate}[label=(\alph*)]
    \item \textbf{Order preservation.} The order of both the projections $\uui{i}(t)$ and attention scores $\sosi{i}(t)$ is preserved
  \begin{align*}
    \uui{0}(t) &> \uui{1}(t) > \ldots > \uui{p-1}(t), \\
    \sosi{0}(t) &> \sosi{1}(t) > \ldots > \sosi{p-1}(t).
  \end{align*}
  \item \textbf{Repulsion.} Pairwise gaps between value projections grow over time: for any $i \neq j$ and  any  $t_+ > t \geq 0$,
  \begin{align*}
     \left( \uui{i}(t_+) - \uui{j}(t_+) \right) > \left( \uui{i}(t) - \uui{j}(t) \right).
  \end{align*}
  \end{enumerate}
\end{restatable}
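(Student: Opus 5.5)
The plan is a first-exit-time (continuity) argument on the pairwise gaps of the reduced system \eqref{eq:logistic-loss-u-a}. Written coordinatewise, with the sign convention under which the flow decreases \eqref{eq:logistic-loss-V-a}, the system reads
\begin{align*}
  \mdot{\uu}_i = c(t)\, \sosi{i}, \qquad \mdot{\aa}_i = \gamma(\bbeta)\, \sosi{i}\left(\uui{i} - \scal{\uu}{\sos}\right),
\end{align*}
where $c(t) = \gamma(\bbeta)\nor{\bbeta_*}^2 > 0$ and $\gamma(\bbeta) \in (0,1)$. If the sign convention is instead the one displayed literally in \eqref{eq:logistic-loss-u-a}, the argument below still goes through once the orderings are reversed.

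First I will check that the solution exists for all $t \ge 0$. The right-hand side is smooth. Moreover $\nor{\mdot{\uu}} \le \nor{\bbeta_*}^2$, so $\uu$ grows at most linearly. Then $\nor{\mdot{\aa}}$ is bounded by a multiple of $\nor{\uu}$, so $\aa$ stays finite on compact time intervals. Hence there is no blow-up and all quantities are continuous in $t$.

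Fix $i<j$ and track the two gaps $g_{ij} = \uui{i}-\uui{j}$ and $h_{ij} = \aa_i - \aa_j$. Since softmax is monotone, $h_{ij}>0$ if and only if $\sosi{i}>\sosi{j}$. The two key derivative identities are
\begin{align*}
  \mdot{g}_{ij} = c(t)\,(\sosi{i}-\sosi{j}), \qquad \mdot{h}_{ij} = \gamma(\bbeta)\left[(\sosi{i}-\sosi{j})(\uui{j}-\scal{\uu}{\sos}) + \sosi{i}\, g_{ij}\right].
\end{align*}
The bracket is not sign-definite in general, since both $\uui{i}$ and $\uui{j}$ may lie below the weighted mean. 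However, at any instant where $h_{ij}=0$ we have $\sosi{i}=\sosi{j}$, and the bracket collapses to $\sosi{i}\,g_{ij}$. This collapse is what makes the argument work.

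At $t=0$ we have $h_{ij}(0)=0$ and $\sos(0)=\one/p$, so $\mdot{h}_{ij}(0)=\gamma\, g_{ij}(0)/p>0$ by \cref{ass:init}. Consequently there is some $\epsilon>0$ such that both $g_{ij}>0$ and $h_{ij}>0$ on $(0,\epsilon)$, for all pairs $i<j$ (finitely many). Now let $\tau$ be the first time after which some inequality fails, and suppose for contradiction that $\tau<\infty$.
\begin{itemize}[noitemsep, topsep=1pt]
  \item If some $g_{ij}(\tau)=0$: on $(0,\tau)$ we have $\sosi{i}>\sosi{j}$, so $\mdot{g}_{ij}>0$ there. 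Thus $g_{ij}(\tau)>g_{ij}(0)>0$, a contradiction.
  \item If some $h_{ij}(\tau)=0$ while $g_{ij}(\tau)>0$: the collapse gives $\mdot{h}_{ij}(\tau)=\gamma\,\sosi{i}\,g_{ij}(\tau)>0$. But $h_{ij}$ is positive just before $\tau$ and reaches $0$ at $\tau$, which forces $\mdot{h}_{ij}(\tau)\le 0$, a contradiction.
\end{itemize}
Hence $\tau=\infty$, which proves order preservation (a) for both $\uu$ and $\sos$.

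Repulsion (b) then follows immediately. For $i<j$ and all $t$, $\mdot{g}_{ij}(t) = c(t)(\sosi{i}-\sosi{j}) > 0$, so $g_{ij}$ is strictly increasing. For $i>j$ the statement is read as the absolute gap $\uui{j}-\uui{i}$, which is $g_{ji}$ and is therefore also increasing.

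The main obstacle is the lack of a sign on $\mdot{h}_{ij}$ away from ties, which prevents a direct monotonicity argument. The plan circumvents this by checking the sign only at the first tie time. The one delicate point is the degenerate start $h_{ij}(0)=0$, which is handled by the strictly positive initial derivative computed above.
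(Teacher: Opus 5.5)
Your proof is correct, but it establishes order preservation by a different route from the paper. You work with the raw logit gap $h_{ij}=\aai{i}-\aai{j}$. Its derivative carries the cross term $(\sosi{i}-\sosi{j})(\uui{j}-\scal{\uu}{\sos})$, which has no fixed sign. You sidestep that term with a first-exit-time argument that uses only the fact that it vanishes at a tie, i.e.\ that the vector field points into $\{g_{ij}>0,\ h_{ij}>0\}$ on the boundary.

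The paper instead removes the cross term by a change of variables. With $\mathcal{A}=\sum_k e^{\aai{k}}$, one has $\frac{d}{dt}\bigl(-e^{-\aai{i}}\bigr)=\frac{\gamma}{\mathcal{A}}\bigl(\uui{i}-\scal{\uu}{\sos}\bigr)$. The weighted mean therefore cancels in differences, and one gets the cooperative pair
\[
\frac{d}{dt}\bigl(e^{-\aai{j}}-e^{-\aai{i}}\bigr)=\frac{\gamma}{\mathcal{A}}\bigl(\uui{i}-\uui{j}\bigr),\qquad \frac{d}{dt}\bigl(\uui{i}-\uui{j}\bigr)=\frac{\gamma}{\mathcal{A}}\nor{\bbeta_*}^2\bigl(e^{\aai{i}}-e^{\aai{j}}\bigr).
\]
The product $\Phi_{ij}=(\uui{i}-\uui{j})(e^{-\aai{j}}-e^{-\aai{i}})$ then has a nonnegative derivative, a positive combination of $(\uui{i}-\uui{j})^2$ and $(e^{\aai{i}}-e^{\aai{j}})^2$. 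It vanishes at $t=0$ and is strictly positive for $t>0$. So neither factor can ever vanish, and both orderings persist without any exit-time bookkeeping.

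What the paper's route buys is a single monotone quantity, plus identities in $e^{\pm\aai{i}}$ that are reused in the ratio analysis behind Theorem~\ref{thm:sparsity-logistic}. What yours buys is that it needs no clever substitution and adapts directly to the normalizations $\sigma_f$ with $f>0$, $f'>0$. At a tie the same collapse gives $\dot{a}_i-\dot{a}_j=\gamma f'(\aai{i})(\uui{i}-\uui{j})/\sum_k f(\aai{k})$, so you do not need the antiderivative $G=\int 1/f'$. You also settle global existence, which the paper takes for granted.

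One small slip: $\dot{g}_{ij}(0)=0$ because $\sos(0)$ is uniform, so $\dot{g}_{ij}>0$ holds only for $t>0$. This still gives strict growth of $g_{ij}$ on $[0,\infty)$.
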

\Cref{thm:no-crossing-logitstic} first establishes a \emph{no-crossing} property: the coordinate ordering is preserved along the trajectory. To prove this, we introduce the Lyapunov potential $\Phi_{ij} = - \left( e^{-\aai{i}} - e^{-\aai{j}}  \right) \left(  \uui{i} - \uui{j} \right)$ for $i\neq j \in [p]$. The derivative of potential is positive, $\der{\Phi_{ij}} > 0 $, hence the relative ordering between the scores and projections is preserved. In particular, $\Phi_{ij}(t) > 0$, for all $t > 0$, hence the co-ordinates never cross and the initial order remains conserved.
Second, the gradient flow exhibits repulsion among the coordinates of $\uu$: for $i < j$, we have $\der{ (\uui{i} - \uui{j}) } > 0$, driven by the preserved ordering of the attention scores. 
Together, these properties highlight the polarizing nature of the gradient field in the value–softmax model.

\begin{figure*}[t]
    \centering
    \includegraphics[width=\linewidth]{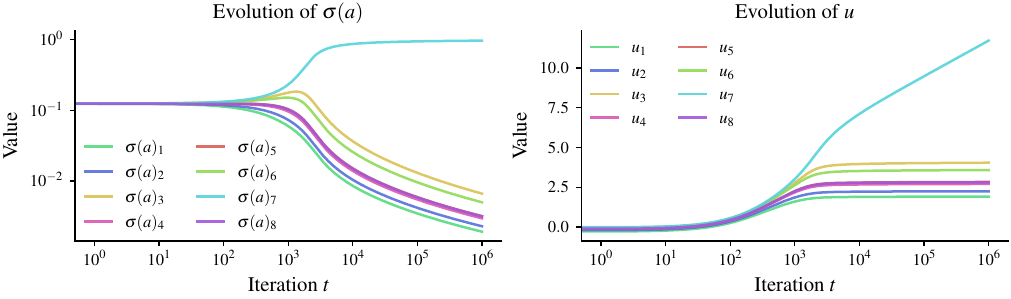}
    \caption{Experimental verification of Theorem \ref{thm:sparsity-logistic} using logistic loss. The plot shows the evolution of attention scores $\sigma(a)$ and value projections $u$ over time. As predicted, the attention scores converge to a one-hot vector (blue line goes to 1, others to 0) and the value projections $u$ diverge, preserving their order.}
    \label{fig:logistic_evolution}
\end{figure*}

\paragraph{Sparsity of the attention scores.}
Having established the polarizing nature of the dynamics, we now quantify its strength and characterize the resulting limiting behavior.
\begin{restatable}{theorem}{SparsityLogistic}
\label{thm:sparsity-logistic}
Consider gradient flow for the loss \eqref{eq:logistic-loss-V-a} under the initialization in \cref{ass:init}. Let $ \delta = \min_{i < j} \left( \uui{i}(0) - \uui{j}(0) \right) $.  Then we have 
\begin{enumerate}[label=(\alph*)]
    \item \textbf{Vanishing loss.}  
    $ \lim_{t \to \infty} L(\bbeta(t)) = 0 $.
    \item \textbf{Decay of non-maximal scores. } For any  $j \neq 0$ and any $t > 0$,
    \[ \frac{\sosi{j}(t)}{\sosi{0}(t)} \leq \frac{1}{1 + \nicefrac{\delta}{p} \int\limits_{0}^{t} \gamma(\bbeta(s)) ds }. \] 
    \item \textbf{One-hot limit.}
    The polarization coefficient $ \int_{0}^{t} \gamma(\bbeta(s)) ds = \Theta(\log t)$, and consequently the attention scores converge to a one-hot vector, i.e,
    \[ \lim_{t \to \infty} \sosi{0}(t) = 1, \quad \text{and} \quad \lim_{t \to \infty} \sosi{j}(t) = 0, \quad \text{for } j \neq 0. \]
\end{enumerate}
\end{restatable}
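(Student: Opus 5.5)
The proof runs through the margin $m(t) := \scal{\bbeta_*}{\bbeta(t)} = \scal{\uu}{\sos}$. I work with the signs of the actual gradient of \eqref{eq:logistic-loss-V-a}, namely $\mdot{\uu} = \gamma \nor{\bbeta_*}^2 \sos$ and $\mdot{\aa} = \gamma J \uu$ with $J = \diag{\sos} - \sos\sos^{\top}$ and $\gamma = (1+e^{m})^{-1}$. These are the signs under which \cref{thm:no-crossing-logitstic} holds as stated. I also use \cref{thm:no-crossing-logitstic} freely: the order $\sosi{0} > \sosi{j}$ and $\uui{0} > \uui{j}$ is preserved, and $\uui{i}(t) - \uui{j}(t) \geq \delta$ for $i<j$.

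\emph{Part (a).} Differentiate the margin and use $\mdot{\sos} = J \mdot{\aa}$:
\[
\mdot{m} = \scal{\mdot{\uu}}{\sos} + \scal{\uu}{J\mdot{\aa}} = \gamma \nor{\bbeta_*}^2 \nor{\sos}^2 + \gamma\, \uu^{\top} J^2 \uu \;\geq\; \frac{\nor{\bbeta_*}^2}{p}\, \gamma(m).
\]
Here I used $\nor{\sos}^2 \geq 1/p$ on the simplex and the fact that $J$ is symmetric, so $J^2 \succeq 0$. Multiplying by $1+e^{m}$ gives $\frac{d}{dt}(m + e^{m}) \geq \nor{\bbeta_*}^2/p$. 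Hence $e^{m(t)} \geq c\,t - C$ for constants $c, C>0$, so $m \to \infty$ and $L = \log(1+e^{-m}) \to 0$.

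\emph{Part (b).} Set $r_j = \sosi{0}/\sosi{j} = e^{\aai{0} - \aai{j}}$, so $r_j(0) = 1$. With $\bar u = \scal{\uu}{\sos}$,
\[
\mdot{r}_j = r_j \gamma\big[\sosi{0}(\uui{0}-\bar u) - \sosi{j}(\uui{j}-\bar u)\big] = r_j\gamma\big[\sosi{j}(\uui{0}-\uui{j}) + (\sosi{0}-\sosi{j})(\uui{0}-\bar u)\big].
\]
Both bracketed terms are nonnegative. This follows from order preservation: $\sosi{0} > \sosi{j}$, and $\uui{0}$ is the largest coordinate, so $\uui{0} \geq \bar u$. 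Dropping the second term and using the repulsion bound $\uui{0}-\uui{j} \geq \delta$ gives $\mdot{r}_j \geq \gamma \delta\, r_j \sosi{j} = \gamma\delta\, \sosi{0}$. Since $\sosi{0}$ is the largest entry of a probability vector, $\sosi{0} \geq 1/p$, so $\mdot{r}_j \geq \gamma\delta/p$. Integrating from $r_j(0)=1$ yields $r_j(t) \geq 1 + \frac{\delta}{p}\int_0^t \gamma$, which is the claimed bound.

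\emph{Part (c).} I expect this to be the delicate step, because it needs a two-sided rate for $\int_0^t\gamma$. For the upper bound, part (a) gives $\gamma \leq e^{-m} \leq 1/(ct - C)$ for large $t$, so $\int_0^t \gamma = O(\log t)$. For the lower bound, I avoid analysing $\uu^{\top}J^2\uu$ and instead bound $m$ by $\uui{0}$. We have $m = \scal{\uu}{\sos} \leq \uui{0}$, and $\mdot{\uu}_0 = \gamma\nor{\bbeta_*}^2 \sosi{0} \leq \gamma \nor{\bbeta_*}^2$. Therefore
\[
\int_0^t \gamma \;\geq\; \frac{\uui{0}(t)-\uui{0}(0)}{\nor{\bbeta_*}^2} \;\geq\; \frac{m(t) - \uui{0}(0)}{\nor{\bbeta_*}^2} \;\geq\; \frac{\log(ct-C) - \uui{0}(0)}{\nor{\bbeta_*}^2},
\]
so $\int_0^t\gamma = \Theta(\log t)$. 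Plugging this into (b) gives $\sosi{j}/\sosi{0} \to 0$ for every $j\neq 0$. Since $\sum_j \sosi{j} = 1$, it follows that $\sosi{0}\to 1$ and $\sosi{j}\to 0$.
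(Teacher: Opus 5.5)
Your proof is correct, and you are right to use $\dot u=+\gamma\|\bbeta_*\|^2\sigma$ and $\dot a=+\gamma J u$. The minus signs in the displayed reduced system are a slip; the appendix's own derivation of $dV/dt$ and $du/dt$ has the plus signs.

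Part (b) is essentially the paper's argument. The paper tracks the reciprocal ratio $q_j=\sigma_j/\sigma_0$. It lower-bounds $u_0-\bar u$ by $\delta(1-\sigma_0)$ but then discards that contribution, reaches $\dot q_j\le-\frac{\gamma\delta}{p}q_j^2$, and integrates $q^{-2}$. Your splitting of $\dot r_j$ gets the equivalent inequality $\dot r_j\ge\gamma\delta\sigma_0$ in one line.

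The genuine difference is in (a) and (c), where the paper works with the top coordinate $u_0$ rather than the margin. It uses $m\le u_0$ to get $\gamma\ge(1+e^{u_0})^{-1}$. Combined with $\dot u_0\ge\gamma/p$, this gives $u_0(t)\ge\log(t/2p-c_0)$, and the paper finishes with $\int_0^t\gamma\ge u_0(t)-u_0(0)$, which is also your last step. Your margin inequality $\dot m\ge\gamma\|\bbeta_*\|^2/p$ is equivalent to the paper's convergence lemma $\dot\ell\le-\frac1p\|\nabla\ell\|^2$ specialised to the logistic loss. It gives you two things the $u_0$ route does not give directly:
\begin{enumerate}
\item \textbf{Part (a) is immediate.} The paper must pass from $u_0\to\infty$ to $m\to\infty$, and does so via $m\ge u_0/p$. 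That inequality is not valid as written when some $u_j$ are negative. It can be repaired using $u_j(t)\ge u_j(0)$, up to an additive constant.
\item \textbf{You get the upper half of $\Theta(\log t)$.} From $e^{m}\gtrsim t$ you obtain $\gamma\le e^{-m}=O(1/t)$. The paper's proof never actually establishes this bound. A lower bound on $u_0$ controls $m$ only through the factor $\sigma_0$, so it does not by itself give $\gamma=O(1/t)$. Yet the paper's subsequent lemma on non-maximal rates uses exactly that rate.
\end{enumerate}
Since $u_0\ge m$, your argument also recovers the paper's growth bound on $u_0$ as a corollary, so nothing from the paper's route is lost.
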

To prove the theorem, we track the ratio $\sosi{j}(t)/\sosi{0}(t)$ and show that it decays at a rate controlled by the integral $\int_{0}^{t}\gamma(\bbeta(s))\,ds$. 
We then prove that this integral diverges as $t\to\infty$, implying that $\sos(t)$ converges to a one-hot vector.
The softmax parameterization implicitly biases the dynamics toward low-entropy (peaked) attention.
Finally, note that,
\begin{align*}
  \bbeta(t) = \sum_j \sosi{j}(t) \vvi{j}(t).
\end{align*}
Thus, in the limit, $\bbeta(t)$ becomes supported on a single column of $\VV(t)$: among the many possible decompositions of $\bbeta$ as a convex combination of $\{\vvi{j}\}$, gradient flow selects an extremal (one-hot) representation. This provides a theoretical explanation for the empirically observed tendency of softmax attention to become sparse.

\paragraph{Comparison with the dynamics of linear networks.}
When $\sigma$ is the identity, the model reduces to a linear network, whose gradient-flow dynamics have been extensively studied in connection with implicit regularization of gradient-based methods. In particular, it is known that the solution converges to a rank-one factorization of $\VV $(see, e.g., \cite{ji2018gradient}). 
In our setting, a similar phenomenon occurs: under the softmax parameterization, $\VV(t)$ also becomes approximately rank one (see Appendix~\ref{lem:log-non-maximal-rates}). The crucial difference, however, is that the attention scores $\sos(t)$ converge to a one-hot vector. This behavior does not arise in linear networks, and it forces the dominant right singular direction of $\VV$ to align with a coordinate axis (i.e., to be one-hot) rather than a generic direction.

The polarization we observe is not caused by the softmax nonlinearity in isolation, but by the \emph{joint} training of the scores $\aa$ and the value matrix $\VV$. Their co-adaptation drives the dynamics toward low-entropy (peaked) attention patterns.

Finally, it is important not to conflate this effect with sparse regression: the sparsity here is in the attention scores $\sos$, not in the entries of the value matrix $\VV$ or final predictor $\bbeta$.

\subsection{Extensions of the result}
\label{sec:extensions}
We now explore how far our analysis extends beyond the logistic-loss setting. In particular, we vary the loss and the attention map to identify when polarization persists and when it fails.
%
\paragraph{Regression.}
We now consider a regression setting in which the target vector $\bbeta_*$ is to be fitted. The loss is 
\begin{align} \label{eq:regression-loss-V-a}
  \LL{\VV, \aa} = \frac{1}{2} \nor{ \bbeta_* - \VV \sf(\aa) }^2.
\end{align}
The associated gradient flow dynamics are
\begin{align*}
  \der{\VV} &=  \left(\bbeta_* - \VV \sos \right) \sos^{\top} \der{t}, \\
  \der{\aa} &=  \left[ \diag{\sos} - \sos \sos^{\top} \right] \VV^{\top} \left(  \bbeta_* - \VV \sos \right) \der{t}.
\end{align*}
We make the following initialization assumption.
\begin{assumption} \label{assm:regression-init}
  The parameters $\VV, \aa$ are initialized such that $\VV(0) = 0$ and the attention scores satisfy $ \sosi{0}(0) > \sosi{1}(0) > \ldots > \sosi{p-1}(0) $.
\end{assumption}
Initializing the linear predictor to zero is standard in analyses of gradient flow for linear regression and simplifies the resulting dynamics. We assume the attention scores are strictly ordered at initialization to avoid ties and break symmetry. Without loss of generality, we label the coordinates so that the ordering is decreasing. Under this initialization, $\VV(t)$ remains rank one, which allows us to reduce the dynamics to a coupled system similar to \cref{eq:logistic-loss-u-a}, as stated next.
\begin{restatable}{lemma}{Regression}
Consider gradient flow for the loss \eqref{eq:regression-loss-V-a} under \cref{assm:regression-init}. Then:
\begin{enumerate}[label=(\alph*)]
    \item \textbf{Rank-one structure.} 
 There exists $\uu(t)\in\R^p$ such that $\VV = \frac{\bbeta_*}{\|\bbeta_*\|^2} \uu^{\top} $.
    \item \textbf{Reduced dynamics.} The pair $\aa, \uu $ satisfies  
\begin{align} \label{eq:regression-loss-u-a}
  \mdot{\uu} &= - \gamma(\bbeta) \, \nor{\bbeta_*}^2 \sos, \\
  \mdot{\aa} &= - \gamma(\bbeta) \, \left[ \diag{\sos} - \sos \sos^{\top} \right] \uu, 
\end{align}
where $ \gamma(\bbeta) =  1 - \frac{\scal{\uu}{\sos}}{\nor{\bbeta_*}^2}$.
\end{enumerate}
\end{restatable}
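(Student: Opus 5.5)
The plan is to prove the rank-one structure with an ansatz-plus-uniqueness argument rather than by integrating the $\VV$-equation directly. I will first write down a candidate reduced system for a pair $(\tilde\uu,\tilde\aa)\in\R^p\times\R^p$, started at $\tilde\uu(0)=0$ and $\tilde\aa(0)=\aa(0)$. From it I define $\tilde\VV(t):=\frac{\bbeta_*}{\nor{\bbeta_*}^2}\tilde\uu(t)^{\top}$, which gives $\tilde\VV(0)=0=\VV(0)$. Then I check that $(\tilde\VV,\tilde\aa)$ satisfies the full gradient-flow system for \eqref{eq:regression-loss-V-a}. The right-hand side of that system is a polynomial in $\VV$ and a smooth function of $\aa$ (softmax is $C^\infty$), so it is locally Lipschitz. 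Picard--Lindelöf uniqueness then gives $(\VV,\aa)=(\tilde\VV,\tilde\aa)$ on the common interval of existence, which is part (a).

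The reduced system comes from the substitution. With $\VV=\frac{\bbeta_*}{\nor{\bbeta_*}^2}\uu^{\top}$ we have $\VV\sos=\frac{\scal{\uu}{\sos}}{\nor{\bbeta_*}^2}\bbeta_*$. Hence the residual is
\begin{align*}
\bbeta_*-\VV\sos=\Big(1-\frac{\scal{\uu}{\sos}}{\nor{\bbeta_*}^2}\Big)\bbeta_*=\gamma(\bbeta)\,\bbeta_*,
\end{align*}
which is parallel to $\bbeta_*$. This is why the rank-one ansatz is closed under the flow.

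\begin{itemize}[noitemsep, topsep=1pt]
\item \emph{The $\VV$-equation.} Its right-hand side becomes $\gamma\,\bbeta_*\sos^{\top}$. This has the form $\frac{\bbeta_*}{\nor{\bbeta_*}^2}\,\mdot{\uu}^{\top}$ exactly when $\mdot{\uu}=\gamma\nor{\bbeta_*}^2\sos$.
\item \emph{The $\aa$-equation.} Using $\VV^{\top}\bbeta_*=\uu$, it becomes $\mdot{\aa}=\gamma\,[\diag{\sos}-\sos\sos^{\top}]\uu$.
\end{itemize}

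So the natural reduced system is the one in \eqref{eq:regression-loss-u-a}, up to the sign convention. The overall sign in front of $\gamma$ must be matched to how the paper writes the logistic analogue \eqref{eq:logistic-loss-u-a}. With $\uu=\VV^{\top}\bbeta_*$ as in (a), my direct computation gives a plus sign, since regression drives $\scal{\uu}{\sos}$ up toward $\nor{\bbeta_*}^2$. I will state the equations with whichever sign the computation yields and remark on the correspondence with \eqref{eq:logistic-loss-u-a}. Part (b) then follows immediately: once (a) holds, $\uu=\VV^{\top}\bbeta_*$ is a consistent definition, and the computation above is exactly the claimed reduced dynamics.

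The main obstacle is making the uniqueness argument global in time, which needs global existence of the reduced system. I will handle it as follows.

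\begin{itemize}[noitemsep, topsep=1pt]
\item \emph{Bound on $\uu$.} We have $\sos\in\Delta$ and $|\gamma|\le 1+\nor{\uu}/\nor{\bbeta_*}^2$. Hence $\nor{\mdot{\uu}}\le C(1+\nor{\uu})$, and Grönwall bounds $\uu$ on every finite interval.
\item \emph{Bound on $\aa$.} The softmax Jacobian has operator norm at most $1$, so $\nor{\mdot{\aa}}$ is bounded by a continuous function of $\nor{\uu}$. Therefore $\aa$ cannot blow up in finite time either.
\end{itemize}

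Thus the reduced solution exists for all $t\ge0$. By uniqueness, so does the full flow, and it coincides with the rank-one construction for all $t$.
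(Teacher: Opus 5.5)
Your argument is correct, but it reaches part (a) by a different route than the paper. The paper works directly on the given trajectory. For any $\beta_{\perp}$ orthogonal to $\bbeta_*$, projecting the $\VV$-equation gives $\frac{d}{dt}(\beta_{\perp}^{\top}\VV) = -(\beta_{\perp}^{\top}\VV)\,\sos\sos^{\top}$. This is a homogeneous linear equation for the row vector $w=\beta_{\perp}^{\top}\VV$, so $w(0)=0$ forces $w\equiv 0$; one can also see this from $\frac{d}{dt}\nor{w}^2=-2(w\sos)^2\le 0$. Thus every column of $\VV$ lies in $\mathrm{span}(\bbeta_*)$, i.e.\ $\VV=\frac{\bbeta_*}{\nor{\bbeta_*}^2}\uu^{\top}$ with $\uu=\VV^{\top}\bbeta_*$. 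Part (b) is then the same substitution you carry out.

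Your ansatz-plus-uniqueness scheme instead constructs the rank-one solution from the reduced system and invokes uniqueness for the full nonlinear flow. That is why you need local Lipschitzness and the Gr\"onwall bootstrap for global existence. The paper's invariance argument avoids both, since it only uses uniqueness for a linear ODE whose coefficient is continuous along the existing trajectory. In exchange, your version proves a little more: the reduced system is well posed on $[0,\infty)$ and generates the full trajectory, including its global existence, which the paper takes for granted.

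On the sign, your suspicion is right, and the paper's own proof agrees with you. It derives $\mdot{\uu}=\nor{\bbeta_*}^2\gamma\,\sos$ and $\mdot{\aa}=\gamma\,[\diag{\sos}-\sos\sos^{\top}]\uu$ with plus signs. The minus signs in the displayed statement are therefore a typo, carried over from the main-text logistic system, which has the same slip.
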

Using the same techniques as in the logistic-loss case, we can derive an analogue of \cref{thm:no-crossing-logitstic} for the regression dynamics.
A key difference is that the cumulative polarization strength is finite:  $\int_{0}^{\infty} \gamma(\bbeta(s)) \der{s} = \Theta(1)$ (see Appendix for the result).
Consequently, polarization is only partial and $\sos(t)$ does not converge to a one-hot vector in general. More broadly, sparsification is controlled by the integral of the  gradient magnitude: the slower the gradient decays, the stronger the induced sparsity in the attention scores.
This suggests that ill-conditioned regression problems (e.g., $\| \bbeta_* - X \VV \sf(\aa) \|^2$ for a matrix  $X$ ) may exhibit stronger sparsification, since ill-conditioning slows down convergence. We empirically verify this relationship in Figures \ref{fig:main-cond-numbers} and \ref{fig:condition_numbers}.


\begin{figure}[b]
  \centering
  \includegraphics[width=\columnwidth]{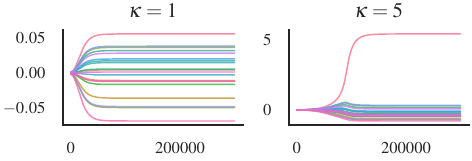}
  \caption{Evolution of attention logits in a regression problem with condition number $\kappa = 1$ and 5.}
  \label{fig:main-cond-numbers}
\end{figure}

\paragraph{ KL divergence.}
A natural alternative is the KL divergence between a target distribution $p_*$ and a predicted distribution $\bbeta$, e.g., $\ell(\bbeta) = -\langle p_*, \log \bbeta\rangle$, where the logarithm is applied elementwise. The resulting gradient-flow dynamics are analogous to the regression case, and we again observe polarization, albeit with diminishing strength compared to the case of classification (Figure \ref{fig:kl}).

\paragraph{Other non-linearities and normalization functions.} 
We first consider replacing softmax with elementwise nonlinearities such as the sigmoid activation  $\sigma_{sig}(a)_i = \nicefrac{1}{1 + \exp(-a_i)}$ or the Relu. For both the regression and logistic losses, we do not observe the same polarization behavior under sigmoid (and similarly under ReLU), see Figure~\ref{fig:toy-many-plots} for reference. Intuitively, the key mechanism behind polarization in our analysis is the mean-centering term that appears in the softmax Jacobian; elementwise nonlinearities such as sigmoid or ReLU do not induce an analogous ``fitness minus average fitness'' interaction across coordinates.

We next consider alternative \emph{normalization} maps, which retain this interaction structure. Let $f$ be a monotonically increasing function and define the normalization $\sf_{f}: \R^{p} \to \R^{p}$ by
\begin{align*}
  \sf_{f}(a)_i = \frac{f(a_i)}{\sum_{j=1}^{p} f(a_j)}.
\end{align*}
Writing $\sos_f \coloneqq \sf_f(\aa)$, the gradient flow dynamics for the logistic loss take the form
\begin{align*}
  \der{\uu} &=  \gamma(\bbeta) \, \|\bbeta_*\|^2   \sos_{f} \, \der{t}, \\
  \der{\aa} &=  \gamma(\bbeta) \, \frac{f'(\aa)}{\sum_{j=1}^{p} f(\aa_j)} \odot ( \uu - \scal{\uu}{\sos_f} \one ) \, \der{t},
\end{align*}
This retains the replicator-type structure, and an analogue of \cref{thm:no-crossing-logitstic} can be established. 
The polarization strength now depends on $f$ and $f'$. For instance, with $f(x)=x$ (which does not enforce positivity), the polarization is typically insufficient to yield one-hot scores, where as with $f(x)=x^2$ the dynamics can drive $\sos_f$ toward a one-hot limit, see Figure~\ref{fig:toy-many-plots} provides empirical evidence for these behaviors.

\paragraph{Limitations.}
A key limitation of our analysis is that, for the losses considered above, the gradient $\nabla \ell(\bbeta)$ is aligned with the target direction $\bbeta_*$ (in the regression case, this relies on \cref{assm:regression-init}). 
This alignment substantially simplifies the dynamics and enables the reduction to the two-vector system in $\aa$ and $\uu$. 
In practice, such an alignment need not hold, and the present analysis may therefore not apply directly. Nevertheless, our experiments indicate that the polarization effect persists even without exact alignment; see \cref{fig:main-cond-numbers} as alignment breaks in the large condition number case.
 
\section{Implications of softmax-induced polarization for attention sparsity and token influence}
\label{sec:implications}
We now connect our theoretical polarization result to qualitative behaviors observed in attention mechanisms. In particular, we discuss how softmax-induced sparsity can lead to attention sinks (and associated large activations) and to an imbalance in how individual tokens influence the model’s predictions. The code is available at \url{https://github.com/tml-epfl/softmax}

\subsection{Attention sinks and massive activations}
\paragraph{Attention sinks.} 
Attention sinks~\citep{xiao2024efficient} are attention patterns in which the maximal attention is over a specific token (often special tokens such as BOS or delimiter/punctuation tokens).
Attention sinks are suggested to be formed when the heads need to be turned off or when they have to create a explicit bias due to absence of biases in self-attention architecture~\citep{sun2024massive,zhang2025attention,gu2025when}. 
However, to create a bias, there is no reason to focus all attention on a single token; a distributed or uniform pattern over a set of tokens could also create such a bias.
%
%

\begin{table}[t]
\centering
\small
\setlength{\tabcolsep}{6pt}
\renewcommand{\arraystretch}{1.35}

\resizebox{\linewidth}{!}{%
\begin{tabular}{l | l | l}
\toprule
Name & $\mathrm{sim}(\phi(\mathbf q_i), \phi(\mathbf k_j))$ & Normalization \\
\midrule
softmax &
$\displaystyle \exp\!\left(\frac{\mathbf q_i \mathbf k_j^\top}{\sqrt{d_h}}\right)$ &
$\displaystyle \sum_{j'=1}^{m}\exp\!\left(\frac{\mathbf q_i \mathbf k_{j'}^\top}{\sqrt{d_h}}\right)$ \\

sigmoid &
$\displaystyle \sigma\!\left(\frac{\mathbf q_i \mathbf k_j^\top}{\sqrt{d_h}}\right)$ &
$\displaystyle \sum_{j'=1}^{m}\sigma\!\left(\frac{\mathbf q_i \mathbf k_{j'}^\top}{\sqrt{d_h}}\right)$ \\

elu &
$\displaystyle \mathrm{elu}\!\left(\frac{\mathbf q_i \mathbf k_j^\top}{\sqrt{d_h}}\right)+1$ &
$\displaystyle \sum_{j'=1}^{m}\Big(\mathrm{elu}\!\left(\frac{\mathbf q_i \mathbf k_{j'}^\top}{\sqrt{d_h}}\right)+1\Big)$ \\

elu $\cdot$ elu &
$\displaystyle \frac{\big(\mathrm{elu}(\mathbf q_i)+1\big)\big(\mathrm{elu}(\mathbf k_j)+1\big)^\top}{\sqrt{d_h}}$ &
$\displaystyle \sum_{j'=1}^{m}\frac{\big(\mathrm{elu}(\mathbf q_i)+1\big)\big(\mathrm{elu}(\mathbf k_{j'})+1\big)^\top}{\sqrt{d_h}}$ \\

mlp &
$\displaystyle \frac{\mathrm{mlp}(\mathbf q_i)\,\mathrm{mlp}(\mathbf k_j)^\top}{\sqrt{d_h}}$ &
$\displaystyle \max\!\left(\left|\sum_{j'=1}^{m}\frac{\mathrm{mlp}(\mathbf q_i)\,\mathrm{mlp}(\mathbf k_{j'})^\top}{\sqrt{d_h}}\right|,\,1\right)$ \\

linear &
$\displaystyle \frac{\mathbf q_i \mathbf k_j^\top}{\sqrt{d_h}}$ &
$\displaystyle \max\!\left(\left|\sum_{j'=1}^{m}\frac{\mathbf q_i \mathbf k_{j'}^\top}{\sqrt{d_h}}\right|,\,1\right)$ \\

\bottomrule
\end{tabular}%
}
\vspace{2pt}
\caption{Attention functions we test in Section \ref{sec:implications}. When computing the normalization, $m =$ sequence length for non-causal models and $m = i$ for causal models. For all functions but softmax, we test their variants both with and without normalization.}
\label{table:attention-types}
\end{table}

To shed light on why optimization may nevertheless favor sinks, consider the value--softmax model in which the attention output is
$\bbeta = \sf(\bA)\VV$, where $\bA\in\mathbb{R}^{T\times T}$ is the attention scores matrix (softmax applied row-wise) and $\VV\in\mathbb{R}^{T\times d}$ are value vectors.
Suppose we optimize $(\VV,\bA)$ so that each position $t$ produces the same target vector $\bbeta_*\in\mathbb{R}^d$, e.g.,
\begin{align*}
  \LL{\VV, \bA} = \ell(\bbeta) = \frac{1}{T} \sum_{i=1}^{T} \log\left( 1 + \exp\scal{\bbeta_*}{- \bbeta[t]} \right).
\end{align*}
Extending \cref{thm:sparsity-logistic}, the row-wise softmax weights converge to a one-hot vector supported on the initially maximal coordinate of $\uu(0)=\VV^\top\bbeta_*$.
Under Assumption~\ref{ass:init} this coordinate is $0$ for all rows, so $\sf(\bA[t,:])\to e_0$ for every $t$, yielding an attention sink at token $0$. This result suggests that attention sinks \emph{can} emerge as a consequence of the optimization dynamics induced by the softmax parameterization.

\paragraph{Massive activations.} 
\citet{sun2024massive} has reported that attention sinks are also coupled with massive activations in some feature dimension, i.e., a regime where a small fraction of activations takes values that are significantly larger than the rest.
A reformulation of the value--softmax viewpoint provides intuition for how this can arise from the same polarizing dynamics.
Recall that the forward pass for some intermediate residual state $R \in T \times T$, and the residual state of the last token $r$ is given by 
\begin{align*}
  R_{+} &=   \sf\left( R  QK^{\top} R^{\top} \right) R V, \\
  r_{+} &=  V^{\top} R^{\top} \sf( RKQ^{\top}z). 
\end{align*}
Note that here the product is between the residual state matrix and the softmax outputs, similar to the value-softmax model with an exception that the softmax inputs are now not free. One can express this idea with a formulation
$\LL{\RR, \aa}= \bloss\left( \RR \sf(\RR \aa) \right)$, where $\bloss$ is a logistic loss function, and see that this model qualitatively follows the behavior of the value-softmax model. 
In Figure \ref{fig:massive_activations}, we can see the polarization in softmax scores and also the polarization in the embeddings of the residual state for this model. This shows that the attention sinks and massive activations  can arise from underlying the polarizing dynamics. 

\paragraph{Induction heads acting as attention sinks.}

\begin{figure}[t]
  \centering
  \includegraphics[width=\columnwidth]{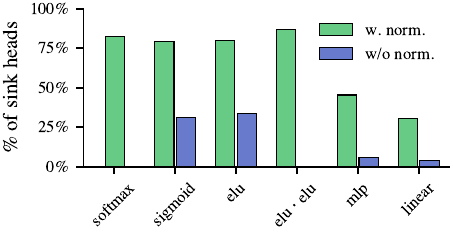}
  \caption{Proportion of sink heads at the 2nd layer of Transformer trained for the induction task.}
  \label{fig:main-exp-induction}
\end{figure}

To test whether the results proven for the value-softmax model hold for more general Transformer setups, we run additional experiments in more realistic transformer model (details in Appendix \ref{app:experimental-details}). 
First, we create a simple setup where attention sinks emerge. We train 2-layer Transformers with a variable number of attention heads on an induction task: predict a bi-gram learned in-context. Our task is a simple variant of Bigram-Backcopy task of  ~\citep{guo2024active,bietti2024birth}. Each input object is a sequence of tokens sampled uniformly at random, 
followed by a bigram, that can be predicted by a construction known as \emph{induction heads} \citep{olsson2022context}. Example input is shown below:
\begin{figure}[h]
\vspace{-4pt}
\centering
{
\textcolor{black!70}{\textbf{BOS}}\;
\textcolor{black!70}{4}\;
\textcolor{black!70}{2}\;
\textcolor{black!70}{6}\;
\textcolor{blue!70!black}{12}\;
\textcolor{green!60!black}{8}\;
\textcolor{black!70}{11}\;
\textcolor{blue!70!black}{12$^\ast$}\;
\textcolor{green!60!black}{8}
}
\vspace{-15pt}
\end{figure}

At the positions where tokens are sampled at random,  an induction heads need not appear and they can act as a fixed attention bias similar to the 
%
the emergence of attention sinks as described in Section \ref{sec:implications}. 
On this task, we evaluate the formation of attention sink and its dependence on various choices of functions listed in Table~\ref{table:attention-types} as an alternative to compute attention. 
We define an attention head as a \emph{sink head} if, at positions where tokens are sampled uniformly at random, it assigns more than 90\% of the total attention weight to the first token. We report the average proportion of such heads in Figure \ref{fig:main-exp-induction}.
In line with the results in Section \ref{sec:extensions}, the normalized positive attention variants are associated with consistently higher proportion of sink heads than their unnormalized counterparts or non-positivite variants (mlp, linear).

\begin{figure}[t]
  \centering
  \includegraphics[width=\columnwidth]{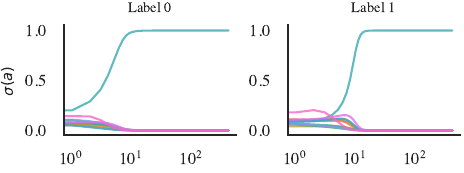}
  \caption{Evolution of the attention scores for two differently labeled samples in the classification experiment.}
  \label{fig:exp-classifcation-attentions}
\end{figure}

\begin{figure}[b]
  \centering
  \includegraphics[width=\columnwidth]{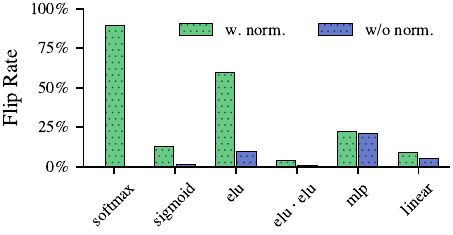}
  \caption{Success rates of adversarial token 1-token flips in 1-layer 1-head Transformer trained for classification.}
  \label{fig:main-exp-classification}
\end{figure}


\paragraph{Pretrained LLMs.}
Additionally, we verify whether the observations from toy setups hold for real-world large language models. We compare softmax and sigmoid (unnormalized) 7-billion-parameter LLMs matched by hyperparameters and compute, released by \citet{ramapuramtheory}. We run a forward pass of both models on random samples from the Pile dataset \cite{gao2020pile} and measure the sparsity score of each head as the average proportion of total attention weight assigned to the max-logit token. Figure \ref{fig:main-llm-pretrained} shows the distribution of this metric across all heads in both models, confirming that softmax leads to significantly sparser attention outputs and a higher likelihood of sink formation. This result aligns with \citet{gu2025when}, who reported similar behaviors in models with 1B parameters.


\subsection{Imbalance in token influence}


A second major implication of our results is the emergence of tokens that exert a disproportionate influence on the model's output. As predicted by Theorem~\ref{thm:sparsity-logistic}, when attention weights converge to a one-hot vector, the output effectively ignores all context except for the single 'max-logit' token. Consequently, the model's predictions become highly sensitive to perturbations of that specific token. 





We construct a simple classification dataset in the following way:
Consider a vocabulary $V$ and partition $V$ into $k$ sets $V^{(1)},\ldots,V^{(K)}$ of equal size. An input and its corresponding label are  
$x_i = v^{(k)}_{(i,1)},\dots,v^{(k)}_{(i,n)}, y_{i}=k$, where $x_i$ is a sequence of $n$ tokens sampled from $V^{(K)}$ and labeled $k$. Note that as the sets are disjoint any one of the input tokens unambiguously predicts the label, thus any form of attention map allows to solve the classification problem.

Due to the low-entropy bias of softmax, however, we expect that the attention weights will concentrate on few tokens, effectively ignoring the others. We test this by examining the behavior of the classifier in adversarial out-of-distribution setting where we are allowed to change one token in the input with the goal of flipping the classifier's prediction. The success rates of this experiment are presented in Figure \ref{fig:main-exp-classification}. As expected, the softmax model's predictions can be easily reversed, however the effect gradually vanishes as more than 1 head and layer are added to the model, as different heads, although sparse, concentrate attention on distinct tokens.



\begin{figure}[t]
  \centering
  \includegraphics[width=\columnwidth]{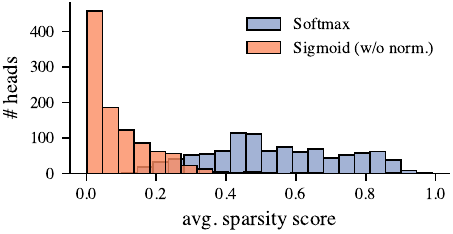}
  \caption{Distribution of attention head sparsity scores for pretrained softmax/sigmoid LLMs.}
  \label{fig:main-llm-pretrained}
  \vspace{-2pt}
\end{figure}

\section{Conclusion}

This work isolates an implicit optimization bias in the value–softmax parameterization $\VV\sf(\aa)$: under gradient flow, jointly training the scores and values  polarizes the attention distribution toward low-entropy solutions, even when many higher-entropy decompositions represent the same predictor. In a stylized classification setting, we prove that $\sf(\aa)$ converges to a one-hot vector, selecting an extremal representation. We extend the analysis to regression and related objectives, clarifying how convergence speed and activation choices modulate polarization. 
Finally, we connect this dynamics to transformer behavior and identify its connection with attention sinks and associated pathologies. 
We support the theory with experiments showing how attention type impacts sparsity in models trained for induction and classification, as well as for pretrained LLMs.

\bibliographystyle{abbrvnat}
\bibliography{references}

\newpage
\appendix
\onecolumn

\section{Some Properties of the Value-softmax Model}

\paragraph{Notation.} For any function $f : \R^{m} \to  \R^{n}$, for $x \in \R^{m}$, denotes $\jacb{x}{f} \in \R^{n \times m}$ denotes Jacobian of the function.

\paragraph{Gradient Flow Dynamics.} We will consider the gradient flow dynamics of 
the loss function $\loss(\VV, \aa)$ given by
\begin{align} \label{eq:gradient-flow}
    \frac{d\VV}{dt} &= - \nabla_{\VV} \loss(\VV, \aa),  \quad 
    \frac{d\aa}{dt} = - \nabla_{\aa} \loss(\VV, \aa).
\end{align}
We will use the chain rule to compute the gradients of $\loss(\VV, \aa)$ as follows:
\begin{align*}
\jacb{\VV}{\loss(\VV, \aa)} &= \jacb{\bbeta}{\ell} \jacb{\VV}{\bbeta(\VV, \aa)}, \\ 
\jacb{\aa}{\loss(\VV, \aa)} &= \jacb{\bbeta}{\ell} \jacb{\aa}{\bbeta(\VV, \aa)}.
\end{align*}
Also, with an abuse of notation, we also denote the jacobian of $\bbeta$ with respect to $\sos$.  
\begin{align*}
    \jacb{\sos}{\loss(\VV, \aa)} = \jacb{\bbeta}{\loss} \jacb{\sos}{\bbeta(\VV, \aa)}, \\
\end{align*}
Using the properties of Jacobians in Property~\ref{prop:jacobians} and using the notation, 
\begin{align*}
 \rrr^{\top} = - \jacb{\bbeta}{\loss}, \text{ i.e., } \quad \rrr = - \nabla_{\bbeta} \loss. 
\end{align*}
We can write the Jacobians as follows, 
\begin{align*}
    \jacb{\VV}{\loss(\VV, \aa)} &=  \jacb{\bbeta}{\ell} \, \, \left[ \sos^{\top} \otimes \I_{d} \right] = - \rrr^{\top}  \left[ \sos^{\top} \otimes \I_{d} \right] = - \sos^{\top} \otimes \rrr^{\top}.   \\
    \jacb{\aa}{\loss(\VV, \aa)} &=  \; \jacb{\bbeta}{\ell} \, \left[ \VV \left( \diag{\sos} - \sos \sos^{\top} \right) \right] = - \rrr^{\top} \left[ \VV \left( \diag{\sos} - \sos \sos^{\top} \right) \right]. \\
\end{align*} 
With $\rrr = - \nabla_{\bbeta} \loss$,  the jacobians now can be written as gradients in the following way:
\begin{align} \label{eq:gradients-general-loss}
    \nabla_{\VV} \loss(\VV, \aa) &=  - \rrr \sos^{\top},  \\
    \nabla_{\aa} \loss(\VV, \aa) &=  - \left( \diag{\sos} - \sos \sos^{\top} \right) \VV^{\top} \rrr.
\end{align}
\section{Logistic Regression}

In this section, we will analyze the gradient flow dynamics of the value-softmax model over the logistic loss function. First, let us define the logistic loss function over the parameter $\bbeta \in \R^{p}$ as follows:
\begin{align} \label{eq:logistic_beta}
    \bloss(\bbeta) = \log \left( 1 + \exp\left( -  \scal{\bbeta_*}{\bbeta}  \right) \right).
\end{align}
This just changes the scale of the problem and does not affect the analysis. 
\paragraph{Gradient flow over logistic loss.} The gradient of the logistic loss function is given as follows:
\begin{align*}
    - \nabla_{\bbeta} \bloss(\bbeta) = \frac{1}{1 + \exp\left(  \scal{\bbeta_*}{\bbeta}  \right)} \bbeta_*.
\end{align*}
Note that it is always along the direction of $\bbeta_*$ and the scale depends on the parameter $\bbeta$.  Denote the scalar term in the front of $\bbeta_*$ as $\gamma(\bbeta)$, i.e.,
\begin{align*}
    \gamma(\bbeta) \coloneqq \frac{1}{1 + \exp\left(  \scal{\bbeta_*}{\bbeta}  \right)}.
\end{align*}
From now, we will drop the $\bbeta$ and denote it just by $\gamma$.  Using the gradient computation in Eq.~\eqref{eq:gradients-general-loss}, the gradient flow dynamics of $\VV$ and $\aa$ writes:
\begin{align*}
    \frac{d\VV}{dt} &=  \gamma \, \bbeta_* \sos^{\top},  \quad 
    \frac{d\aa}{dt} =  \gamma \, \left( \diag{\sos} - \sos \sos^{\top} \right) \VV^{\top} \bbeta_*.
\end{align*}
The evolution of $\sos$ can be written as:
\begin{align*}  
        \frac{d\sos}{dt} &=  \gamma \, \left( \diag{\sos} - \sos \sos^{\top} \right)^2 \VV^{\top} \bbeta_*.
\end{align*}
Tracking the evolution of the projection vector $\uu$ given by $\uu \coloneqq \VV^{\top} \bbeta_*$, we have,
\begin{align*}
    \frac{\der{\uu}}{\der t} &= \gamma \sos \nor{\bbeta_*}^2.
\end{align*}
This gives us the system of equations governing the gradient flow dynamics of the value-softmax model over the logistic loss function as follows:
\begin{align} 
  \mdot{\uu} &= - \gamma(\bbeta) \, \nor{\bbeta_*}^2 \sos, \\
  \mdot{\aa} &= - \gamma(\bbeta) \, \left[ \diag{\sos} - \sos \sos^{\top} \right] \uu.
\end{align}

\paragraph{No-crossing property.} Now that we have established the gradient flow dynamics of the value-softmax model over the logistic loss function, we will analyze some properties of the dynamics. We will now give a lemma which highlights the "no-crossing" property of the dynamics. The lemma proceeds by studing the differences between co-ordinates of the projection vector $\uu$ and $\aa$.



\NoCrossingLogistic*
\begin{proof}
    Let $     \cA = \sum_{i} \exp{\aai{i}} $, and note that $\sosi{i}{=} \, \nicefrac{\exp{\{\aai{i}\}}}{\cA}$. Using this, we can write the evolution of $\aai{i}$ as follows:
\begin{align*}
    \frac{d\aai{i}}{dt} &=  \gamma \frac{\exp\{\aai{i}\}}{\cA}  \left[ \uui{i} - \gamma \scal{\sos}{\uu} \right], \\
    \frac{d\{ {-} \exp\{{-\aai{i}} \}\}}{dt} &=   \frac{\gamma }{\cA}  \left[ \uui{i} - \gamma \scal{\sos}{\uu} \right].
\end{align*}
Taking the differences between the evolution of two co-ordinates $i$ and $j$, we have,
\begin{align*}
    \frac{d \{ {-}\left( \exp{-\aai{i}} - \exp{-\aai{j}} \right) \}}{dt} &=   \frac{\gamma }{\cA}  \left[ \uui{i} - \uui{j} \right], \\
    \frac{ d \{ \uui{i} - \uui{j} \} }{ d t} &=  \gamma \, \left( \frac{\exp{\{\aai{i}\}} - \exp{\{\aai{j}\}}}{\cA} \right) \, \| \bbeta_* \|^2. 
\end{align*}
Notice how the differences reinforce each other, to formulate this we will consider the following Lyapunov function. For $i,j \leq p$,
\begin{align*}
    \Phi_{ij}(t) = (\uui{i}(t) - \uui{j}(t)) \{ {-}\left( e^{-\aai{i}(t)} - e^{-\aai{j}(t)} \right) \}.
\end{align*}

Consider the evolution of the potential $\Phi_{ij}(t)$, note that, due to the  
\begin{align*}
    \frac{d}{dt} \left[ (\uui{i} - \uui{j}) \{ {-}\left( e^{-\aai{i}} - e^{-\aai{j}} \right) \}  \right] &= \frac{\gamma}{\cA} \left( (\uui{i} - \uui{j})^2 \right) + \frac{\gamma}{\cA} \| \bbeta_* \|^2 \{e^{\aai{i}} - e^{\aai{j}}\} \{ {-}\left( e^{-\aai{i}} - e^{-\aai{j}} \right) \},  \\
    \frac{d}{dt} \left[ (\uui{i} - \uui{j}) \{ {-}\left( e^{-\aai{i}} - e^{-\aai{j}} \right) \}  \right] &= \frac{\gamma}{\cA} \left( (\uui{i} - \uui{j})^2 \right)
     + \frac{\gamma}{\cA} \| \bbeta_* \|^2 \frac{ \left(e^{\aai{i}} - e^{\aai{j}} \right)^2 }{e^{\aai{i}} e^{\aai{j}} }.
\end{align*}

    Note that at $t=0$, we have $\aai{i}(0) = \aai{j}(0) = 0$, $\sosi{i}(0) = \sosi{j}(0) = {1}/{p}$ and $\uui{i}(0) > \uui{j}(0)$. So $\Phi_{ij}(0) = 0$ and $\Phi_{ij}(t) > 0$, for any time $t > 0$. The strict inequality comes from the gradient at $t{=}0$ is non-zero. 

    Note that for any $i,j$, we have, $\Phi_{ij} > 0$. Hence, there cannot exist a time $t$ such that $\uui{i}(t) = \uui{j}(t)$ or $\aai{i}(t) = \aai{j}(t)$. Hence the attention scores and the projections cannot cross each other. Hence the order  $\uui{0} > \uui{1} > \uui{2} > \ldots $ at initialization is preserved for all time and also the order $\aai{0} > \aai{1} > \aai{2} > \ldots $ is enforced at time $t > 0$ due to the positivity of $\Phi_{ij}(t)$.
   
The other implication of this no-crossing property is that the differences increase with time. 
For any $i < j$, we have,
\begin{align*}
\frac{ d \{ \uui{i} - \uui{j} \} }{ d t} &=  \gamma \, \left( \frac{\exp{\{\aai{i}\}} - \exp{\{\aai{j}\}}}{\cA} \right) \, \| \bbeta_* \|^2 > 0, \text{ for } t > 0. 
\end{align*}
Hence, $\uui{i} (t_+) - \uui{j} (t_+) > \uui{i} (t) - \uui{j} (t) $ for any $t_+ > t > 0$. A similar argument holds for $\aai{i}$ and $\sosi{i}$.
\end{proof}

\paragraph{Sparsity of the attention.} In the previous lemma, we established the no-crossing property of the dynamics and the repulsive forces. In this section, we will analyze the strength of this repulsive feild to establish the sparsity of the attention scores. To begin with, we will analyze the ration of the scores of two co-ordinates $i$ and $j$. Along with this, we will assume that $$\nor{\bbeta_*} = 1.$$

\SparsityLogistic*
\begin{proof}
 To prove this theorem, we will track the ratio of weights of the attention scores $\sos$'s with  to the highest attention score $\sosi{0}$ . Let us denote the ratio of the attention scores as follows: 
\begin{align*}
    \qqi{i}(t) = \frac{\sosi{i}(t)}{\sosi{0}(t)} = \exp{\left( \aai{i}(t) - \aai{0}(t) \right)}.    
\end{align*}
Using the $\qqi{i}$'s the softmax weights can be written as,
\begin{align*}
\sosi{i} &= \frac{\qqi{i}}{1 + \sum_{j = 1}^{d-1} \qqi{j}} = \frac{\qqi{i}}{\cQ},  \\
\end{align*}  
where
\begin{align*}
    \cQ &= 1 + \sum_{j = 1}^{d-1} \qqi{j}. 
\end{align*}

\paragraph{Evolution of the ratios.} To compute the evolution of $\qqi{i}$, we will first compute the following dynamics,

\begin{align*}
    d {e^{-\aai{i}}} &= - \frac{\gamma}{\cA} \left( \uui{i} - \scal{\uu}{\sos} \right) d{t}, \\
    d { \left( \uui{i} - \uui{j} \right) } &= \frac{\gamma}{\cA} \left( e^{\aai{i}} - e^{\aai{j}} \right) d{t}, \\
    d { \left( e^{-\aai{i}} - e^{-\aai{j}} \right) } &= - \frac{\gamma}{\cA} \left( \uui{i} - \uui{j} \right) d{t}. \\
    d { \, e^{\aai{i}}} &= \gamma \frac{\left(e^{\aai{i}}\right)^2}{\cA} \left( \uui{i} - \scal{\uu}{\sos} \right) d{t}.
\end{align*}
Lets compute, 
\begin{align*}
    d \, \qqi{i} =  d { \left( e^{ \aai{i}}  e^{-\aai{0}}\right) } &= e^{-\aai{0}} d {e^{\aai{i}}} + e^{\aai{i}} d {e^{-\aai{0}}}, \\     
    &= \gamma \frac{e^{-\aai{0}} \left(e^{\aai{i}}\right)^2}{\cA} \left( \uui{i} - \scal{\uu}{\sos} \right) d{t} - \gamma \frac{e^{\aai{i}}}{\cA} \left( \uui{0} - \scal{\uu}{\sos} \right) d{t}, \\
    &= \gamma \frac{e^{\aai{i}}}{\cA} \left[ e^{-\aai{0}} e^{\aai{i}} \left( \uui{i} - \scal{\uu}{\sos} \right) - \left( \uui{0} - \scal{\uu}{\sos} \right) \right] d{t}, \\
    &= \gamma \sosi{i} \left[ \,  \qqi{i} \left( \uui{i} - \scal{\uu}{\sos} \right) - \left( \uui{0} - \scal{\uu}{\sos} \right) \right] d{t}, \\
    &= \gamma \, \, \frac{\qqi{i}}{\cQ} \, \, \left[ \,  \qqi{i} \left( \uui{i} - \scal{\uu}{\sos} \right) - \left( \uui{0} - \scal{\uu}{\sos} \right) \right] d{t}.
\end{align*}

Using the above derivation, the dynamics of $\qqi{i}$ can be written as,
\begin{align*}
    \der{\qqi{i}} &= \gamma \frac{\qqi{i}}{\cQ} \left[ \qqi{i} \left( \uui{i} - \scal{\uu}{\sos} \right) - \left( \uui{0} - \scal{\uu}{\sos} \right) \right] \der{t}, \\
    &= \gamma \frac{\qqi{i}}{\cQ} \left[ - \qqi{i} \left( \uui{0} - \uui{i} \right) - \left(  1 - \qqi{i} \right)\left(  \uui{0} - \scal{\uu}{\sos} \right) \right] \der{t}.
\end{align*}

From the previous therorem all time $t \geq 0$, the following properties hold along the trajectory of the gradient flow, 
\begin{enumerate}[label=(\alph*)]
    \item For all $i = 1, 2, \ldots, p-1$, $\, \, 0 \leq \qqi{i}(t) \leq 1$  and therefore $ 1 \leq \cQ \leq p$ for all time $t \geq 0$.
    \item For all $i = 1, 2, \ldots, p-1$, $\uui{0}(t) - \uui{i}(t) \geq \delta $ for all time $t \geq 0$.
\end{enumerate}

Using these properties, we will have some asymptotic bounds on relevant quantities. First, the evolution of $\qqi{i}$ can be bounded as follows,
\begin{align*}
    \der{\qqi{i}} &\leq -  \frac{\gamma}{\cQ} \left[ \qqi{i}^2 + \qqi{i} (1 - \qqi{i}) ( 1 - \frac{1}{\cQ}) \right] \delta \der{t}, \\
    \der{\qqi{i}} &\leq - \frac{\gamma}{\cQ} \left[ \qqi{i} - \frac{\qqi{i}(1-\qqi{i})}{\cQ} \right] \delta \der{t} , \\
    &\leq -\frac{\gamma}{\cQ} \left[ \qqi{i} - \qqi{i}(1 - \qqi{i}) \right] \ delta \der{t}, \\
    &\leq - \frac{\gamma}{\cQ} \qqi{i}^2 \delta \der{t} \leq -  \frac{\gamma}{p} \qqi{i}^2 \delta \der{t}.
\end{align*} 
Using this, we can have the following bound on $\qqi{i}$,
\begin{align*}
    \int_{1}^{\qqi{i}(t)} \frac{1}{q^2} \der{q} &\leq - \frac{\delta}{p} \int_{0}^{t} \gamma(h) \der{h}, \\
    - \left( \frac{1}{\qqi{i}(t)} - 1 \right) &\leq - \frac{\delta}{p} \int_{0}^{t} \gamma(h) \der{h}, \\
    \qqi{i}(t) &\leq \frac{1}{1 + \frac{\delta}{p} \int_{0}^{t} \gamma(h) \der{h}}.
\end{align*}
Note that this bound is irrespective of $i$. The goal of the rest of the proof is to show that the integral $\int_{0}^{t} \gamma(h) \der{h}$ diverges as $t \to \infty$.
Recall that,
\begin{align*}
    \gamma = \frac{1}{1 + \exp\left(  \scal{\bbeta_*}{\bbeta}  \right)} = \frac{1}{1 + \exp\left(  \sum_{j} \sosi{j} \uui{j}  \right)}. 
\end{align*} 
In the above expression, $\uui{j} \leq \uui{0}$ for all $j$.
\begin{align*}
    \uui{j} \leq \uui{0} &\implies \sum_{j} \sosi{j} \uui{j} \leq \uui{0} \sum_{j} \sosi{j} = \uui{0}, \\
    &\implies 1 + \exp\left(  \sum_{j} \sosi{j} \uui{j}  \right) \leq 1 + \exp\left(  \uui{0}  \right), \\
    &\implies \gamma \geq \frac{1}{1 + \exp\left(  \uui{0}  \right)}.
\end{align*}
First, lets check evolution of $\uui{0}$, 
\begin{align*}
    d{\uui{0}} &=  \gamma  \sosi{0} dt =  \gamma \frac{1}{\cQ} d{t} \geq \frac{\gamma}{p} d{t} \geq \frac{1}{p(1 + \exp( \uui{0} ))} d{t}.
\end{align*}
From this evolution we have the following, 
\begin{align*}
    \int_{0}^{t} d h &\leq p \int_{\uui{0}(0)}^{\uui{0}(t)} (1 + \exp(z)) \der{z}, \\
    t &\leq p \left[ \uui{0}(t) - \uui{0}(0) + \exp(\uui{0}(t)) - \exp(\uui{0}(0)) \right], \\
    &\leq p \left[ \uui{0}(t) + \exp(\uui{0}(t)) + c_0 \right], \\
    &\leq p \left[ 2 \exp(\uui{0}(t)) + c_0 \right], \\
    &\implies \exp(\uui{0}(t)) \geq \frac{t}{2p} - c_0, \\
    &\implies \uui{0}(t) \geq \log\left(  \frac{t}{2p} - c_0  \right).
\end{align*}
We have the following bound on $\uui{0}(t)$,
\begin{align} \label{eq:u0-bound}
    \uui{0}(t) &\geq \log\left(  \frac{t}{2p} - c_0  \right).
\end{align}
We have established a lower bound on $\uui{0}(t)$. We will use this to lower bound the integral $\int_{0}^{t} \gamma(h) \der{h}$. We have, 
\begin{align*}
\der{\uui{0}} &= \frac{\gamma}{\cQ} \der{t} \leq \gamma \der{t}, 
\end{align*}
This implies,
\begin{align*}
 \uui{0} - \uui{0}(0) \leq \int_{0}^{t} \gamma(h) \der{h} &\implies \int_{0}^{t} \gamma(h) \der{h} \geq \uui{0}(t) - \uui{0}(0).
\end{align*}
Combining this we have the following, 
\begin{align} \label{eq:gamma-t-bound}
\int_{0}^{t} \gamma(h) \der{h} &\implies \int_{0}^{t} \gamma(h) \der{h} \geq \uui{0}(t) - \uui{0}(0) \geq \log\left(  \frac{t}{2p} - c_0  \right) - \uui{0}(0).
\end{align}
This implies that 
\begin{align}\label{eq:gamma-diverges}
    \lim_{t \to \infty} \int_{0}^{t} \gamma(h) \der{h} &\to \infty.
\end{align}
Using the bound on $\qqi{i}(t)$ derived earlier, we have,
\begin{align*}
    \qqi{i}(t) &\leq \frac{1}{1 + \frac{\delta}{p} \int_{0}^{t} \gamma(h) \der{h}} \leq \frac{p}{p + \delta \left( \log\left( t\right) - c_0  \right)}.
\end{align*}
Using the divergence property from equation~\eqref{eq:gamma-diverges}, we have,
\begin{align*}
    \lim_{t \to \infty} \qqi{i}(t) &\to 0.
\end{align*}

The only part that is left to be proved is convergence of the loss to zero. 
Note that 
\begin{align*}
    \scal{\bbeta_*}{\bbeta(t)} &= \sum_{j} \sosi{j}(t) \uui{j}(t) \leq \uui{0}(t) \sum_{j} \sosi{j}(t) \geq \frac{1}{p} \uui{0}(t).
\end{align*}
and we have already established that $\uui{0}(t) \to \infty$ as $t \to \infty$. Therefore, we have,
\begin{align*}
    \lim_{t \to \infty} \scal{\bbeta_*}{\bbeta(t)} &\to \infty.
\end{align*}
Using this in the expression of the logistic loss function in Eq.~\eqref{eq:logistic_beta}, we have,
\begin{align*}
    \lim_{t \to \infty} \bloss(\bbeta(t)) &= \lim_{t \to \infty} \log \left( 1 + \exp\left( -  \scal{\bbeta_*}{\bbeta(t)}  \right) \right) = 0.
\end{align*}    
\end{proof}

\begin{lemma}~\label{lem:log-non-maximal-rates} We will make two additional remarks regarding the theorem above which gives asymptotic bounds on the non-maximal scores and projection parameters. 
\begin{enumerate}[label=(\alph*)]
    \item The projection co-ordinates $\uui{i}(t)$ for $i \neq 0$ does not grow unbounded with time, i.e., there exists a constant $C > 0$ such that for all time $t \geq 0$, we have $\uui{i}(t) \leq C$. As $\uui{0} \to \infty$, this implies that $\VV$ is close to a rank 1 structure. 
    \item The attention scores for $i \neq 0$ satisy $\sosi{i} = O(\frac{1}{\log^2 (t)})$. 
\end{enumerate}
\end{lemma}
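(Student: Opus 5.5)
The plan is to sharpen the bound on the ratios $\qqi{i} = \sosi{i}/\sosi{0}$ from \cref{thm:sparsity-logistic}, from $O(1/\Gamma)$ to $O(1/\Gamma^2)$, where $\Gamma(t) \coloneqq \int_0^t \gamma(h)\der{h}$. Both claims then follow by integration. As in the proof of \cref{thm:sparsity-logistic}, I normalize $\nor{\bbeta_*}=1$, so that $\der{\uui{i}} = \gamma\,\sosi{i}\,\der{t}$.

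\textbf{Step 1: the gap $\uui{0}-\uui{i}$ grows linearly in $\Gamma$.} We have
\[
\der{(\uui{0}-\uui{i})} = \gamma\,(\sosi{0}-\sosi{i})\,\der{t}.
\]
By part (c) of \cref{thm:sparsity-logistic}, $\sosi{0}\to 1$ and $\sosi{i}\to 0$. Hence there is a time $T_0$ after which $\sosi{0}-\sosi{i}\geq \tfrac12$. Consequently, for $t\geq T_0$,
\[
\uui{0}(t)-\uui{i}(t) \geq \tfrac12\bigl(\Gamma(t)-\Gamma(T_0)\bigr).
\]
Since $\Gamma(t)\to\infty$ by \eqref{eq:gamma-diverges}, this gives $\uui{0}-\uui{i}\geq c\,\Gamma(t)$ for all $t\geq T_1$, for some constant $c>0$.

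\textbf{Step 2: a quadratic-in-$\Gamma$ bound on $\qqi{i}$.} I return to the exact ratio dynamics derived in the proof of \cref{thm:sparsity-logistic}:
\[
\der{\qqi{i}} = -\gamma\frac{\qqi{i}}{\cQ}\Bigl[\qqi{i}(\uui{0}-\uui{i}) + (1-\qqi{i})(\uui{0}-\scal{\uu}{\sos})\Bigr]\der{t}.
\]
The second bracketed term is nonnegative, since $\uui{0}$ is the largest coordinate and $\sos$ is a probability vector. I drop it, use $\cQ\leq p$, and insert Step 1. This gives, for $t\geq T_1$,
\[
\der{\qqi{i}} \leq -\frac{c}{p}\,\gamma\,\Gamma\,\qqi{i}^2\,\der{t}.
\]
Equivalently, $\der{(1/\qqi{i})} \geq \tfrac{c}{2p}\,\der{(\Gamma^2)}$. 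Integrating from $T_1$ yields
\[
\qqi{i}(t) \leq \frac{1}{1/\qqi{i}(T_1) + \tfrac{c}{2p}\bigl(\Gamma(t)^2-\Gamma(T_1)^2\bigr)} = O\bigl(\Gamma(t)^{-2}\bigr).
\]

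\textbf{Step 3: conclusions.} For (b), note that $\sosi{i}\leq\qqi{i}$. The lower bound $\Gamma(t)\geq \log(t/2p - c_0) - \uui{0}(0)$ from \eqref{eq:gamma-t-bound} then gives $\sosi{i}=O(1/\log^2 t)$. Only this lower bound on $\Gamma$ is needed, so I avoid relying on the upper half of the $\Theta(\log t)$ claim.

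For (a), write
\[
\uui{i}(t) = \uui{i}(T_1) + \int_{T_1}^t \gamma\,\sosi{i}\,\der{s} \leq \uui{i}(T_1) + \int_{T_1}^t \frac{C\,\gamma(s)}{1+\Gamma(s)^2}\,\der{s}.
\]
Substituting $\Gamma$ as the integration variable, the integral is at most $C\arctan\Gamma(t) \leq C\pi/2$. On $[0,T_1]$ the coordinate $\uui{i}$ is trivially bounded, because $\gamma\leq 1$. Hence $\uui{i}$ is uniformly bounded for every $i\neq 0$. Since $\uui{0}\to\infty$, this means $\VV^\top\bbeta_*$ concentrates on coordinate $0$. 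The same integration, applied to $\der{\VV}=\gamma\,\bbeta_*\sos^\top\der{t}$, shows that all columns $\vvi{j}$ with $j\neq 0$ stay bounded while $\vvi{0}$ grows along $\bbeta_*$; this is the approximate rank-one structure claimed.

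The main obstacle is Step 1. It needs a gap growing in $\Gamma$, not merely the constant lower bound $\delta$ used in the theorem, and this is exactly what upgrades $1/\Gamma$ to $1/\Gamma^2$, making $\int\gamma\,\sosi{i}$ summable. The qualitative limit in \cref{thm:sparsity-logistic}(c) provides this after the time $T_0$. A cleaner alternative, if uniform-in-time constants are wanted, is to use $\sosi{0}-\sosi{i}\geq \sosi{0}(1-\qqi{i})$ together with the earlier bound $\qqi{i}\leq 1/(1+\tfrac{\delta}{p}\Gamma)$, which gives an explicit $T_0$.
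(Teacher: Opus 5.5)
Your proof is correct. Its skeleton matches the paper's, in three stages:
\begin{enumerate}
\item show that the gap $\uui{0}-\uui{i}$ eventually grows at rate $\gtrsim\gamma$;
\item feed this into the ratio ODE for $\qqi{i}$ to upgrade the decay from first to second order;
\item integrate $\der{\uui{i}}=\gamma\,\sosi{i}\,\der{t}$.
\end{enumerate}
The genuine difference is the clock you measure in.

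The paper converts everything to physical time. It combines $\uui{0}-\uui{i}=\Omega(\log t)$ with the claim $\gamma=O(1/t)$ to compare $\der{\qqi{i}}$ with $\qqi{i}^2\log t/t$. It then bounds $\der{\uui{i}}\le c\,\der{t}/(t\log^2 t)$ and integrates.

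You stay in the intrinsic clock $\Gamma=\int_0^t\gamma$ throughout. You use gap $\ge c\Gamma$, then $\der{(1/\qqi{i})}\ge\tfrac{c}{2p}\der{(\Gamma^2)}$, and finally $\int\gamma\,\sosi{i}\le C\int\der{\Gamma}/(1+\Gamma^2)\le C\pi/2$.

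This buys you robustness in two places.
\begin{itemize}
\item The paper reads $\gamma=O(1/t)$ off \eqref{eq:gamma-t-bound}, but that inequality only lower-bounds $\int_0^t\gamma$. A pointwise upper bound needs a separate argument, e.g.\ showing $\scal{\bbeta_*}{\bbeta}\ge\uui{0}-O(1)$ via $1-\sosi{0}=O(1/\Gamma)$ and $\uui{0}\le\uui{0}(0)+\Gamma$.
\item The paper's displayed chain for (b) bounds $\der{\qqi{i}}$ from \emph{below} and then integrates. As written, that yields a lower bound on $\qqi{i}$, not the claimed upper bound.
\end{itemize}
You instead use the upper bound $\der{\qqi{i}}\le-\gamma\,\qqi{i}^2(\uui{0}-\uui{i})/\cQ\,\der{t}$, which is the direction actually needed. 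With the gap measured in $\Gamma$, it integrates exactly. Consequently only the lower half $\Gamma\gtrsim\log t$ of the $\Theta(\log t)$ claim is used, and only when translating (b) into time. Part (a) needs no time-rate information at all.

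What the paper's route offers in exchange, once repaired, is an explicit time rate $O(1/(t\log^2 t))$ for the growth of $\uui{i}$. That is slightly more informative but not required by the lemma.

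One routine detail is worth writing out. Your Step 2 bound gives $\qqi{i}\le C/(1+\Gamma^2)$ on $[T_1,\infty)$ for the following reasons:
\begin{itemize}
\item its denominator is always at least $1$, since $1/\qqi{i}(T_1)\ge 1$ and $\Gamma$ is increasing;
\item it is at least $\tfrac{c}{4p}\Gamma^2$ once $\Gamma^2\ge 2\Gamma(T_1)^2$.
\end{itemize}
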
    

\begin{proof}
From Eq.~\eqref{eq:gamma-t-bound}, we can say that $\gamma = O(1/t)$ and from the evolution of $\uui{0} - \uui{1}$, we have that, 
\begin{align*}
    d \{\uui{0} - \uui{i}\} &= \frac{\gamma}{\cQ} ( 1 - \qqi{i} ) \der{t} 
\end{align*}
As $\qqi{i} \to 0$, there exists a time $t_0$ after which we have $1 - \qqi{i} \geq 1/2$. Therefore, for $t \geq t_0$, we have,
\begin{align*}
    d \{\uui{0} - \uui{i}\} &\geq \frac{\gamma}{2p} \der{t} \\
    \uui{0}(t) - \uui{i}(t) &\geq \frac{1}{2p} \int_{t_0}^{t} \gamma(h) \der{h} + c_1 = \Omega(\log(t)),
\end{align*}

Using this in the evolution of $\qqi{i}$, we have,
\begin{align*}
    \der{\qqi{i}} &\leq - \frac{\gamma}{\cQ} \qqi{i}^2 \delta \der{t} \leq -  \frac{\gamma}{p} \qqi{i}^2 \delta \der{t}.
\end{align*}

    From the above equations~\eqref{eq:u0-bound} we have that $\uui{0}(t) \geq \log\left(  \frac{t}{2p} - c_0  \right).$ and~\eqref{eq:gamma-t-bound}, we have the following bound on $\qqi{i}(t)$,
\begin{align*}
    \der{\qqi{i}}
    &= \gamma \frac{\qqi{i}}{\cQ} \left[ - \qqi{i} \left( \uui{0} - \uui{i} \right) - \left(  1 - \qqi{i} \right)\left(  \uui{0} - \scal{\uu}{\sos} \right) \right] \der{t}. \\
    &\geq - c \, \frac{\gamma}{p} \, \qqi{i}^2 \, \log{t} \geq - c  \, \qqi{i}^2 \, \frac{\log{t}}{t}.
\end{align*}
using the asymptotic bound on $\gamma, \uui{0} - \uui{i}$. Integrating this we get that 
\begin{align*}
    \int_{1}^{\qqi{i}(t)} \frac{1}{q^2} \der{q} &\geq - c \int_{1}^{t} \frac{\log{h}}{h} \der{h}, \\
    - \left( \frac{1}{\qqi{i}(t)} - 1 \right) &\geq - c \left( \frac{(\log{t})^2}{2} - \frac{(\log{1})^2}{2} \right), \\
    \qqi{i}(t) &\leq \frac{1}{1 + c \, \frac{(\log{t})^2}{2}} = O\left( \frac{1}{\log^2(t)} \right).
\end{align*}

Now coming to the evolution of $\uui{i}$, we have,
\begin{align*}
    d{\uui{i}} &=  \gamma  \sosi{i} dt =  \gamma \frac{\qqi{i}}{\cQ} d{t} \leq \gamma \, \qqi{i} \, d{t} \leq \, c \frac{1}{t \log^2(t)} d{t}.   
\end{align*}
Integrating again, we get, 
\begin{align*}
    \uui{i}(t) - \uui{i}(t_0) &\leq c \int_{t_0}^{t} \frac{1}{h \log^2(h)} d{h}, \\
    &\leq c \left[ - \frac{1}{\log(h)} \right]_{t_0}^{t}, \\
    &\leq c \left( \frac{1}{\log(t_0)} - \frac{1}{\log(t)} \right) \leq c'.
\end{align*}
This proves the remark.
\end{proof}

\section{Linear Regression}

Here we consider the case of loss function given by the squared loss, i.e., 
\begin{align} \label{eq:sq_loss_beta}
    \bloss(\bbeta) = \| \bbeta - \bbeta_* \|^2.
\end{align}

The gradient flow dynamics for the parameters $\vv$ and $\aa$ can be written as,
\begin{align}\label{eq:gradient-flow-v-a-square}
    \mdot{\VV} &= ( \bbeta_* - \bbeta ) \sos^{\top}, \\
    \mdot{\aa} &= \left[ \diag{\sos} - \sos \sos^{\top} \right] \VV^{\top} ( \bbeta_* - \bbeta ).
\end{align}

Let $\beta_{\perp}$ be any direction be orthogonal to $\bbeta_*$. We can decompose $\vv$ as,
\begin{align*}
   \der{ \left( \beta_{\perp} \right)^{\top} \VV }  &= \beta_{\perp}^{\top} \left( \bbeta_* - \VV \sos \right) \sos^{\top} \der{t} = -  \left( \beta_{\perp} \right)^{\top} \VV \sos \sos^{\top} \der{t}. \\
\end{align*}

\Regression*
\begin{proof}
    Recall that the gradient flow dynamics of the parameters $\VV$ and $\aa$ are given by,
    \begin{align}
    \mdot{\VV} &= ( \bbeta_* - \bbeta ) \sos^{\top}, \\
    \mdot{\aa} &= \left[ \diag{\sos} - \sos \sos^{\top} \right] \VV^{\top} ( \bbeta_* - \bbeta ).
\end{align}

Let $\beta_{\perp}$ be any direction be orthogonal to $\bbeta_*$. We can decompose $\vv$ as,
\begin{align*}
   \der{ \left( \beta_{\perp} \right)^{\top} \VV }  &= \beta_{\perp}^{\top} \left( \bbeta_* - \VV \sos \right) \sos^{\top} \der{t} = -  \left( \beta_{\perp} \right)^{\top} \VV \sos \sos^{\top} \der{t}.
\end{align*}

Under this assumption, we know that the initial projection of $\VV(0)$ along $\beta_{\perp}$ is zero, i.e., $\left( \beta_{\perp} \right)^{\top} \VV(0) = 0$. Therefore, from the above equation, we have,
\begin{align*}
    \left( \beta_{\perp} \right)^{\top} \VV &= 0, \quad \forall t \geq 0.
\end{align*}
This implies that $\VV$ is rank one and $\bbeta_*$ is a left singular vector of $\VV$ and $$ \VV = \frac{\bbeta_*}{\nor{\bbeta_*}^2} \uu^{\top}. $$
Using this is the gradient flow dynamics of $\aa$, we have,
\begin{align*}
    \mdot{\aa} &= \left[ \diag{\sos} - \sos \sos^{\top} \right] \frac{\uu}{\nor{\bbeta_*}^2} \left( \bbeta_*^{\top} \bbeta_* - \bbeta_*^{\top} \bbeta \right), \\
    &= \left[ \diag{\sos} - \sos \sos^{\top} \right] \uu \left( 1 - \scal{\bbeta_*}{\bbeta} / \nor{\bbeta_*}^2 \right).     
\end{align*}
Similarly, the gradient dynamics of $\uu$ can be written as,
\begin{align*}
    \mdot{\uu} &= \nor{\bbeta_*}^2 \left( 1 - \scal{\bbeta_*}{\bbeta} / \nor{\bbeta_*}^2 \right) \sos.
\end{align*}
This finishes the proof. 
\end{proof}

\subsection{Remarks on the linear regression case}
Note that the dynamics of $\aa$ and $\uu$ are similar to the dynamics in the classification case with logistic loss. The only difference is the scaling factor $\left( 1 - \scal{\bbeta_*}{\bbeta} / \nor{\bbeta_*}^2 \right)$ which depends on the current loss. Theorem~\ref{thm:no-crossing-logitstic} implies the no-crossing property of the projections and attention scores in this case as well in the case of linear regression.

Now we will turn to the strength of the polarization, i.e., $\int \gamma(s) ds .$ Following the same approach as in Theorem~\ref{thm:sparsity-logistic}, we can establish the following result regarding the sparsity of the attention scores in the case of linear regression. From now on, assume that $\nor{\bbeta_*} = 1$ for brevity. From the Lemma~\ref{lem:convergence-general-loss}, we have that,
\begin{align*}
    \frac{ d \ell(\bbeta)}{dt} &\leq - \frac{1}{p} \nor{\nabla_{\bbeta} \ell}^2.
 \end{align*}
Using this in the context of linear regression, we get, 
\begin{align*}
    \ell(\bbeta) &= \frac{1}{2} \| \bbeta - \bbeta_* \|^2, \\
    \nabla_{\bbeta} \ell &= ( \bbeta - \bbeta_* ), 
\end{align*}
Substituting these in the convergence lemma, we have, 
\begin{align*}
     \frac{d \| \bbeta - \bbeta_* \|^2 }{dt} &\leq - \frac{1}{p} \| \bbeta - \bbeta_* \|^2. 
\end{align*}
Hence the loss converges to zero exponentially fast and note that 
$\gamma = \scal{ \bbeta_*}{\bbeta_* - \bbeta} \leq \nor{\bbeta_*} \nor{\bbeta_* - \bbeta}$. Hence that $\gamma$ also converges to zero exponentially fast. Therefore, the integral $\int_{0}^{t} \gamma(h) \der{h}$ converges to a finite value as $t \to \infty$. This implies that the attention scores do not converge to a one-hot vector in the case of linear regression.

\section{Supporting Material}

\subsection{Some properties of the value-softmax model.}

\begin{lemma}
\textbf{Invariance.} The loss function have the following invariance property:    

\begin{enumerate}[label=(\alph*)]
    \item For any $c \in \R$, we have, 
    \begin{align*}
        \loss(\VV, \aa) = \loss(\VV, \aa + c \one).
    \end{align*}
    and taking the gradient with respect to $c$ we get the following invariance in the gradients: 
    \begin{align*}
        \scal{\nabla_{\aa} \loss(\VV, \aa)}{\one} = 0.      
    \end{align*}
    \item The gradient flow dynamics has the following invariance:  
    \begin{align*}
        \sum_{i=1}^{d} \frac{d \aai{i}}{dt} = 0 \implies \sum_{i=1}^{d} \aai{i}(t) = \sum_{i=1}^{d} \aai{i}(0), \quad \forall t \geq 0.
    \end{align*}
\end{enumerate}
\end{lemma}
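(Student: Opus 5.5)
Part (a) follows from the observation that the loss depends on $\aa$ only through $\sos=\sf(\aa)$, and softmax is invariant under adding a constant to every coordinate: $\sf(\aa+c\one)_i = e^{c}e^{\aai{i}}/(e^{c}\sum_j e^{\aai{j}}) = \sosi{i}$. Hence $c\mapsto \loss(\VV,\aa+c\one)$ is constant in $c$. Differentiating at $c=0$ with the chain rule gives $\frac{d}{dc}\loss(\VV,\aa+c\one)\big|_{c=0} = \scal{\nabla_{\aa}\loss(\VV,\aa)}{\one}=0$.

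As a check against the explicit formula \eqref{eq:gradients-general-loss}, compute
\[
\left(\diag{\sos}-\sos\sos^{\top}\right)\one = \sos - \sos\,(\sos^{\top}\one) = 0,
\]
since $\sos$ lies on the simplex. The matrix $\diag{\sos}-\sos\sos^{\top}$ is symmetric, so $\one^{\top}\nabla_{\aa}\loss = -\one^{\top}\left(\diag{\sos}-\sos\sos^{\top}\right)\VV^{\top}\rrr = 0$, whatever the value of $\rrr=-\nabla_{\bbeta}\loss$. This second derivation shows the identity holds for every loss $\bloss$, not just logistic or square loss.

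For part (b), substitute the gradient flow \eqref{eq:gradient-flow}:
\[
\sum_i \frac{d\aai{i}}{dt} = \scal{\one}{\frac{d\aa}{dt}} = -\scal{\one}{\nabla_{\aa}\loss} = 0
\]
by part (a). Integrating from $0$ to $t$, which is valid along any solution of the flow since it is $C^1$, gives $\sum_i\aai{i}(t)=\sum_i\aai{i}(0)$.

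The argument has no real obstacle. The only point that needs care is making the gradient convention explicit: the Jacobian of softmax is symmetric, so the row-vector and column-vector forms agree. Under \cref{ass:init} this yields $\sum_i\aai{i}(t)=0$ for all $t$.
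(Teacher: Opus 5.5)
Your proposal is correct and follows the same route the paper indicates. The paper gives no separate proof; the statement itself sketches the argument: shift-invariance of the softmax makes $c\mapsto\loss(\VV,\aa+c\one)$ constant, differentiating at $c=0$ gives $\scal{\nabla_{\aa}\loss}{\one}=0$, and summing the gradient-flow equations and integrating yields conservation of $\sum_i \aai{i}$. Your extra check via $(\diag{\sos}-\sos\sos^{\top})\one=0$ is a valid direct confirmation that the identity holds for any differentiable $\bloss$.
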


\begin{lemma}\label{lem:convergence-general-loss}\textbf{Convergence.}
Consider the gradient flow dynamics given by the equations~\eqref{eq:gradients-general-loss} over the loss function $\loss(\VV, \aa)$, 
\begin{align*}
    \frac{d \, \bloss(\bbeta)}{dt} &\leq - \frac{1}{d} \nor{\nabla_{\bbeta} \bloss}^2 
\end{align*} 
\end{lemma}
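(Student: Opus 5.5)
The plan is to compute $\frac{d}{dt}\bloss(\bbeta(t))$ by the chain rule through $\bbeta=\VV\sos$. Each of the two parameter blocks then contributes a nonpositive term, and the $\VV$-block term alone gives the claimed rate. Here $d$ is the number of coordinates of $\sos$ (i.e.\ $p$ in the main text). Write $\rrr=-\nabla_{\bbeta}\bloss$ and $J=\diag{\sos}-\sos\sos^{\top}$, which is the Jacobian of the softmax.

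First, using the gradients in Eq.~\eqref{eq:gradients-general-loss}, I will write the flow as $\mdot{\VV}=\rrr\sos^{\top}$ and $\mdot{\aa}=J\VV^{\top}\rrr$. The chain rule gives $\mdot{\sos}=J\mdot{\aa}=J^2\VV^{\top}\rrr$. The product rule then gives
\begin{align*}
\mdot{\bbeta}=\mdot{\VV}\sos+\VV\mdot{\sos}=\nor{\sos}^2\rrr+\VV J^2\VV^{\top}\rrr .
\end{align*}
Taking the inner product with $\nabla_{\bbeta}\bloss=-\rrr$ yields
\begin{align*}
\frac{d\,\bloss(\bbeta)}{dt}=-\nor{\sos}^2\nor{\rrr}^2-\rrr^{\top}\VV J^2\VV^{\top}\rrr=-\nor{\sos}^2\nor{\nabla_{\bbeta}\bloss}^2-\nor{J\VV^{\top}\rrr}^2 .
\end{align*}
The second equality uses only that $J$ is symmetric, so $\rrr^{\top}\VV J^2\VV^{\top}\rrr=\nor{J\VV^{\top}\rrr}^2$. (The matrix $J$ is also positive semidefinite, being a covariance matrix of a categorical distribution, but that fact is not needed here.)

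Next, I drop the second term, which is nonpositive. It remains to bound $\nor{\sos}^2$ from below. Since $\sos$ lies in the simplex, $\sum_i\sosi{i}=1$. Cauchy--Schwarz then gives $1=\scal{\sos}{\one}^2\le d\,\nor{\sos}^2$, so $\nor{\sos}^2\ge 1/d$. Substituting this bound gives $\frac{d\,\bloss(\bbeta)}{dt}\le-\frac1d\nor{\nabla_{\bbeta}\bloss}^2$.

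There is no real obstacle in this argument. The only point needing care is the computation of $\mdot{\bbeta}$, where the $\aa$-contribution must be routed through $J$ twice and recognized as a squared norm rather than a cross term of indefinite sign. Symmetry of $J$ handles this. The bound holds uniformly along the trajectory and uses no assumption on the initialization.
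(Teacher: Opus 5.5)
Your proof is correct and follows the paper's argument. You differentiate $\bbeta=\VV\sos$ by the product and chain rules, observe that the $\aa$-block contributes a nonpositive term, and lower-bound $\nor{\sos}^2\ge 1/d$ via Cauchy--Schwarz on the simplex. Your version is in fact slightly cleaner: you correctly keep the factor $J^2$ and dispose of it by symmetry of $J$. The paper's matrix $\MM(\sos,\VV)$ instead writes $\VV(\diag{\sos}-\sos\sos^{\top})\VV^{\top}$ with a single factor of $J$ and argues positive semidefiniteness of that, which leads to the same conclusion.
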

\begin{proof}
    With $\rrr = - \nabla_{\bbeta} \bloss $, the time derivatives of the parameters are given by, 
    \begin{align*}
        \frac{d\VV}{dt} &=  \rrr \sos^{\top},  \\
        \frac{d\sos}{dt} &=  \left( \diag{\sos} - \sos \sos^{\top} \right)^2 \VV^{\top} \rrr.
    \end{align*}
    Considering the evolution of $\bbeta = \VV \sos$, we have,
    \begin{align*}
        \frac{d \bbeta}{dt} &= \frac{d \VV \sos}{dt} = \frac{d\VV}{dt} \sos + \VV \frac{d\sos}{dt}, \\
        &= \left( \rrr \sos^{\top} \right) \sos + \VV \left( \diag{\sos} - \sos \sos^{\top} \right)^2 \VV^{\top} \rrr, \\
        &= \left[ \nor{\sos}^2 \I +  \VV \left( \diag{\sos} - \sos \sos^{\top} \right) \VV^{\top} \right] \rrr. 
    \end{align*}
    Let us denote the matrix in the brackets as $\MM(\sos, \VV)$, i.e., 
    \begin{align*}
        \MM(\sos, \VV) = \nor{\sos}^2 \I +  \VV \left( \diag{\sos} - \sos \sos^{\top} \right) \VV^{\top}.
    \end{align*}    
    Using this notation, we can write the evolution of $\bbeta$ as follows:
    \begin{align*}
        \frac{d \bbeta}{dt} &=  \MM(\sos, \VV) \rrr = - \MM(\sos, \VV) \nabla_{\bbeta} \bloss. 
    \end{align*}
    Considering the time derivative of the loss function $\bloss(\bbeta)$, we have,
    \begin{align*}
        \frac{d \bloss(\bbeta)}{dt} &= \scal{\nabla_{\bbeta} \bloss}{\frac{d \bbeta}{dt}} = - \scal{\nabla_{\bbeta} \bloss \, \, }{\, \, \MM(\sos, \VV) \, \nabla_{\bbeta} \bloss}. 
    \end{align*}
    Now we will show that the matrix $\MM(\sos, \VV)$ is positive definite. First, 
    \begin{align*}
        \nor{\sos}^2  \geq \sum_{i=1}^{d} \sosi{i}^2 \geq \frac{1}{d} \left( \sum_{i=1}^{d} \sosi{i} \right)^2 = \frac{1}{d}.
    \end{align*}
    The next term is positive semi-definite since for any vector $\uu \in \R^{d}$, we have,
    \begin{align*}
        \uu^{\top} \VV \left( \diag{\sos} - \sos \sos^{\top} \right) \VV^{\top} \uu &= \left( \VV^{\top} \uu \right)^{\top} \left( \diag{\sos} - \sos \sos^{\top} \right) \left( \VV^{\top} \uu \right), \\
        &= \mu^{\top} \left( \diag{\sos} - \sos \sos^{\top} \right) \mu, \quad \text{ where } \mu = \VV^{\top} \uu, \\
        &= \sum_{i=1}^{d} \sosi{i} \mu_i^2 - \left( \sum_{i=1}^{d} \sosi{i} \mu_i \right)^2 \geq 0,
    \end{align*}
    where the last inequality follows from the Cauchy-Schwarz inequality. Hence we have,
    \begin{align*}
        \uu^{\top} \MM(\sos, \VV) \uu \geq \frac{1}{d} \nor{\uu}^2, \quad \forall \, \uu \in \R^{d}.
    \end{align*}
    This shows that the matrix $\MM(\sos, \VV)$ is positive definite. Using the positive definiteness of $\MM(\sos, \VV)$, we have,
    \begin{align*}
        \frac{d \bloss(\bbeta)}{dt} &\leq - \frac{1}{d} \nor{\nabla_{\bbeta} \bloss}^2 \leq 0.
    \end{align*}
    This shows that the loss function $\bloss(\bbeta)$ is non-increasing along the gradient flow dynamics of $\VV$ and $\sos$.
\end{proof}

\begin{property} \label{prop:jacobians}
For the value-softmax model, i.e., $\bbeta = \VV \sf(\aa)$, this property summarizes the jacobians of the different components.
    \begin{itemize}
        \item The jacobian of the softmax function $\sf: \R^{d} \to \Delta^{d}$ :
        \begin{align}\label{eq:softmax_jacobian}
            \jacb{\aa}{\sf(\aa)} = \diag{\sf(\aa)} - \sf(\aa) \sf(\aa)^{\top}. 
        \end{align}
        With $\sos = \sf(\aa)$, the jacobian can be written as
        \begin{align}\label{eq:s_jacobian}
            \jacb{\aa}{\sos} = \diag{\sos} - \sos \sos^{\top}. 
        \end{align}
        \item The jacobian with respect to the $\bbeta$ with respect to $\VV$ and $\aa$ are given by
        \begin{align}
            \jacb{\VV}{\bbeta} &= \sos^{\top} \otimes \I_{d}, \\
            \jacb{\sos}{\bbeta} &= \VV, \\
            \jacb{\aa}{\bbeta} &= \VV \jacb{\aa}{\sos} = \VV \left( \diag{\sos} - \sos \sos^{\top} \right).
        \end{align}
        It is needed to be clarified that the jacobian $\jacb{\VV}{\bbeta}$ is a matrix of size $d \times d^2$ and it is constructed in the following way:
        \begin{align*}
            \jacb{\VV}{\bbeta} = \begin{bmatrix}
                \jacb{\vvc{1}}{\bbeta} & \jacb{\vvc{2}}{\bbeta} & \ldots & \jacb{\vvc{d}}{\bbeta}
            \end{bmatrix} = \sos^{\top} \otimes \I_{d}.
        \end{align*}
        where each row is $ \jacb{\mathrm{vec}{(\VV)}}{\bbeta_i} $ . 
    \end{itemize}
\end{property}

\subsection{Normalization} 
Here, we will consider the case of general normalization functions and establish that the no-crossing property holds for the logistic loss as well as the regression loss function . In particular, 
consider a general monotonically increasing differentiable function $f$, and define the normalization $\sf_{f}: \R^{p} \to \R^{p}$ for any $a$ in $\R^{p}$ as
\begin{align*}
  \sf_{f}(a)_i = \frac{f(a_i)}{\sum\limits_{j=1}^{p} f(a_j)}.
\end{align*}
For the logistic loss function, the gradient flow dynamics is given by
\begin{align} \label{eq:general-normalization-logistic} 
  \der{\uu} &=  \gamma(\bbeta) \, \|\bbeta_*\|^2   \sos_{f} \, \der{t}, \\
  \der{\aa} &=  \gamma(\bbeta) \, \frac{f'(\aa)}{\sum\limits_{j=1}^{p} f(\aa_j)} \odot ( \uu - \scal{\uu}{\sf_f} \one ) \, \der{t}.
\end{align}

\begin{lemma}
    Consider the gradient flow dynamics given by the equation Eq.~\eqref{eq:general-normalization-logistic} over the logistic loss function $\bloss(\bbeta)$. For the initialization given by the Assumption~\ref{ass:init}, the following no-crossing property holds for all time $t \geq 0$,
    \begin{align*}
        \uui{0}(t) > \uui{1}(t) > \uui{2}(t) \ldots > \uui{p-1}(t), \\
        \aai{0} (t) \geq  \aai{1} (t) \geq \aai{2} (t) \ldots \geq \aai{p-1} (t).
     \end{align*}    
\end{lemma}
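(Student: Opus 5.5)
The plan is to adapt the Lyapunov argument of \cref{thm:no-crossing-logitstic} to the general normalization $\sf_f$. We replace the transformation $a \mapsto -e^{-a}$ by a primitive adapted to $f$. Write $\cA_f = \sum_j f(\aai{j})$ and $\sos_f = \sf_f(\aa)$. From \eqref{eq:general-normalization-logistic}, for each coordinate
\begin{align*}
\mdot{\aa}_i = \frac{\gamma}{\cA_f} f'(\aai{i}) \left( \uui{i} - \scal{\uu}{\sos_f} \right).
\end{align*}
Let $G$ be the function with $G' = 1/f'$, which is well defined and strictly increasing wherever $f' > 0$. Then
\begin{align*}
\frac{d}{dt} \left( G(\aai{i}) - G(\aai{j}) \right) = \frac{\gamma}{\cA_f} \left( \uui{i} - \uui{j} \right),
\end{align*}
and the mean-field term cancels exactly as in the softmax case. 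Similarly,
\begin{align*}
\frac{d}{dt} \left( \uui{i} - \uui{j} \right) = \gamma \|\bbeta_*\|^2 \, \frac{f(\aai{i}) - f(\aai{j})}{\cA_f}.
\end{align*}

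Next we define the potential
\begin{align*}
\Phi_{ij} = \left( \uui{i} - \uui{j} \right) \left( G(\aai{i}) - G(\aai{j}) \right).
\end{align*}
Its derivative is
\begin{align*}
\frac{d\Phi_{ij}}{dt} = \frac{\gamma}{\cA_f} \left( \uui{i} - \uui{j} \right)^2 + \frac{\gamma \|\bbeta_*\|^2}{\cA_f} \left( f(\aai{i}) - f(\aai{j}) \right) \left( G(\aai{i}) - G(\aai{j}) \right).
\end{align*}
Both $f$ and $G$ are increasing, so the second term is nonnegative, and the first term is nonnegative as well. The argument needs $\gamma > 0$, which always holds for the logistic loss, and $\cA_f > 0$. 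Under \cref{ass:init}, $\aa(0) = 0$ gives $\Phi_{ij}(0) = 0$, and $\uui{i}(0) \neq \uui{j}(0)$ makes $d\Phi_{ij}/dt > 0$ at $t = 0$. Hence $\Phi_{ij}(t) > 0$ for all $t > 0$. In particular $\uui{i}(t) \neq \uui{j}(t)$ for every $t$, so by continuity the strict ordering of $\uu$ is preserved. The sign of $G(\aai{i}) - G(\aai{j})$ must match that of $\uui{i} - \uui{j}$, which gives $\aai{i} \geq \aai{j}$ for $i < j$ by monotonicity of $G$.

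The ordering of $\aa$ can also be obtained more directly, and in the weak form stated. For $i < j$, the order of $\uu$ gives $\frac{d}{dt}(G(\aai{i}) - G(\aai{j})) = \frac{\gamma}{\cA_f}(\uui{i} - \uui{j}) > 0$. Starting from zero, $G(\aai{i}) - G(\aai{j}) \geq 0$ follows immediately. The inequality is non-strict because $f'$ may vanish, for example if $f$ is merely nondecreasing, in which case $\aa$ may stall.

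The main obstacle is the regularity of $G$ and of the normalization. If $f'$ vanishes at some points, $G$ is not defined there. If $f$ takes nonpositive values, as with $f(x) = x$ near $0$ or $f(x) = x^2$ for negative $a$, then $\cA_f$ may vanish or $\sf_f$ may leave the simplex. I would handle this in two steps. First, assume $f > 0$ and $f' > 0$ on the region visited; at $\aa = 0$ this only needs $f(0) > 0$ and $f'(0) > 0$ locally. Second, in the degenerate case, replace the potential argument by the direct comparison: first prove the $\uu$ ordering, which only needs $f(\aai{i}) \geq f(\aai{j})$ whenever $\aai{i} \geq \aai{j}$. Then bootstrap, because each of the two orderings forces the differences in the other to be nondecreasing. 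The bootstrap should be made rigorous with a first-crossing-time argument. Suppose $t^*$ is the first time some ordering fails. Just before $t^*$, both orderings hold, so the $\uu$-gaps are nondecreasing and remain at least their initial values, which are positive. The $\aa$-gaps then have nonnegative derivatives and cannot become negative. This contradicts the existence of $t^*$.
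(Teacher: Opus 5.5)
Your main argument is correct and is essentially the paper's proof. It uses the same primitive $G$ with $G'=1/f'$, the same potential $\Phi_{ij}=(u_i-u_j)(G(a_i)-G(a_j))$ and the same derivative computation, and your step from $\Phi_{ij}(0)=0$, $\dot\Phi_{ij}(0)>0$ to $\Phi_{ij}(t)>0$ for all $t>0$ spells out what the paper only asserts.

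The optional fallback for degenerate $f$ is unsound as written. The $u$-ordering makes $G(a_i)-G(a_j)$ nondecreasing, but not the raw gap $a_i-a_j$, whose derivative
\[
\frac{\gamma}{\sum_k f(a_k)}\bigl[f'(a_i)(u_i-u_j)+(f'(a_i)-f'(a_j))(u_j-\langle u,\sigma_f\rangle)\bigr]
\]
can be negative. Without $G$ you would need a Gr\"onwall-type comparison using local Lipschitz continuity of $f'$. This case lies outside what the paper proves anyway, since it simply assumes $f'>0$.
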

\begin{proof}
    Let $\bF = \sum_{j} f(\aa_j)$ and let $$ G(s) = \int_{0}^{s} \frac{1}{f'(v)} dv. $$ Using this to we rewrite the co-ordinate wise dynamics as 
    \begin{align*}
        d\uui{i} &= \frac{\gamma}{\bF} f(\aai{i}) dt, \\ 
        d\aai{i} &= \frac{\gamma}{\bF} f'(\aai{i}) \left( \uui{i} - \scal{\uu}{\sos_f} \right) dt, \\
        \frac{d \aai{i}}{f'(\aai{i})} &= \frac{\gamma}{\bF} \left( \uui{i} - \scal{\uu}{\sf_f} \right) dt. \\
        d{ G(a_i) } &= \frac{\gamma}{\bF} \left( \uui{i} - \scal{\uu}{\sf_f} \right) dt, \\
    \end{align*}
For any $i,j$, taking the differences, 
    \begin{align*}
        d \left( G(\aai{i}) - G(\aai{j}) \right) &= \frac{\gamma}{\bF} \left( \uui{i} - \uui{j} \right) dt. \\
        d \left( \uui{i} - \uui{j} \right) &= \frac{\gamma}{\bF} \left( f(\aai{i}) - f(\aai{j}) \right) dt.
    \end{align*}

    We can take a potential function $\Phi_{ij} = \left( G(\aai{i}) - G(\aai{j}) \right)\left( \uui{i} - \uui{j} \right)$, using this we have, 
    \begin{align*}
        d \Phi_{ij} &= \left( \uui{i} - \uui{j} \right) d \left( G(\aai{i}) - G(\aai{j}) \right) + \left( G(\aai{i}) - G(\aai{j}) \right) d \left( \uui{i} - \uui{j} \right), \\
        &= \frac{\gamma}{\bF} \left[ \left( \uui{i} - \uui{j} \right)^2 + \left( G(\aai{i}) - G(\aai{j}) \right) \left( f(\aai{i}) - f(\aai{j}) \right) \right] dt. 
    \end{align*}
    Note that $f$ increases monotonically, hence $f'(a) > 0$ everywhere, which implies $1/f'(a) > 0$ everywhere and G also increases monotonically. We assume that it does not hit $0$ in finite time, so the term $\left( G(\aai{i}) - G(\aai{j}) \right) \left( f(\aai{i}) - f(\aai{j}) \right)$ is always non-negative. This implies that $d \Phi_{ij} \geq 0$, ensuring that the potential function $\Phi_{ij}$ does not decrease over time. Consequently, the no-crossing property holds for all time $t \geq 0$.
\end{proof}




\section{Additional Experimental Details}
\label{app:experimental-details}

\subsection{Induction heads acting as attention sinks}

\paragraph{Task format.} The structure of each object in the dataset is illustrated in Figure \ref{fig:induction_task_format}. Each object consists of a beginning-of-sequence token, $k$ randomly sampled tokens from the vocabulary $t_1,\dots,t_k$, a token $t_i'$ uniquely corresponding to a randomly chosen $t_i$, followed by $t_{i+1}$. We experimented both with $t_i' = t_i$ and $t_i'$ being a different symbol, observing that the latter typically leads to less attention sinks forming.

The loss is calculated as $L_\mathrm{total} = \lambda \cdot (1/k) L_\mathrm{random} + L_\mathrm{induction}$. We tested $\lambda = 0$ and $\lambda = 0.2$. In the former case, the model is only penalized on incorrect predictions of the last token and not incentivized to keep a constant uniform prediction on random tokens. This leads to all heads duplicating each other and no almost no attention sinks for any attention type.

\begin{figure}[h]
  \centering
  \includegraphics[width=0.5\textwidth]{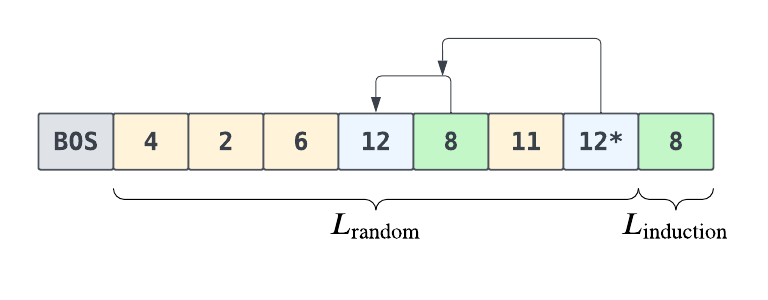}
  \caption{Task format in the induction heads experiment. The arrows illustrate the induction head mechanism.}
  \label{fig:induction_task_format}
\end{figure}

\begin{table}[h]
  \centering
  \renewcommand{\arraystretch}{1.15}
  \setlength{\tabcolsep}{10pt}
  \begin{tabular}{l c l c}
    \hline
    \textbf{Hyperparameter} & \textbf{Value} & \textbf{Hyperparameter} & \textbf{Value} \\
    \hline
    Layers           & 2   & Epochs     & 10 \\
    Hidden dim       & 128 & Train size         & 100{,}000 \\
    Feedforward dim  & 512 & Eval size          & 1{,}000 \\
    Vocab size       & $51^*$  & Sequence length & 13 \\
    Causal           & \texttt{true} & Batch size     & 256 \\
    Use LayerNorm    & \texttt{true} & Use MLP        & \texttt{true} \\
    \hline
    Heads            & $\bf \{1,2,4,8\}$ & $t_i' = t_i$ & $\{\texttt{true},\textbf{\texttt{false}}\}$ \\
    Learning rate    & $\bf \{10^{-2},10^{-3},10^{-4}\}$ & $\lambda$ & $\{0.0,\bf{0.2}\}$ \\
    Attention type   & all in Table \ref{table:attention-types} & Positional type  & \{\texttt{concat}, \textbf{\texttt{additive}}, \texttt{no}\} \\
    Seed             & $\bf \{0,1,2,3,4\}$ & & \\
    \hline
  \end{tabular}
  \vspace{3pt}
  \caption{Configuration and hyperparameters for the induction experiment. The bold values indicate the setup shown in Figures \ref{fig1} and \ref{fig:main-exp-induction}. Results for other setups are additionally presented in Appendix \ref{app:results-induction} Vocabulary size is increased by 50 if $t_i' \ne t_i$.}
  \label{tab:induction-config-sweep}
\end{table}

\paragraph{Hyperparameters.}

In this experiment, the model is a standard 2-layer $H$-head causal Transformer with pre-normalization, similar to GPT-2 \cite{radford2019language}. The full hyperparameter and configuration details are available in Table \ref{tab:induction-config-sweep}.

\paragraph{Metrics.}

We discard all runs that did not achieve induction accuracy above $95\%$. A sink score for each head is measured as an average proportion of attention weight assigned to the BOS token on the positions where no informed prediction is possible. We consider a head \emph{a sink head} if it's sink score is higher than 0.9.
The computation can be expressed via the following pseudocode:
{\small \begin{verbatim}
    A = attention_matrix  # [layer, head, sample, query_seq_len, key_seq_len]

    # index 0 is removed because it's BOS. 
    # indices -1 and -2 are removed because prediction is not random.
    bos_score[L, H] = A[L, H, :, 1 : -2, 0]
    total_weight[L, H] = A[L, H, :, 1 : -2, :].sum(axis=-1)

    # clipping affects attention types without positivity. doesn't change is_sink.
    sink_score[L, H] = (bos_score[L, H] / total_weight[L, H]).mean().clip(0, 1) 
    is_sink[L, H] = (sink_score[L, H] > 0.9)
\end{verbatim}}

We only plot the results from the second layer, as all heads in the first layer typically attend to the previous token.

\subsection{Difference in attention sparsity between pretrained softmax and sigmoid LLMs}

We use the weights and the codebase\footnote{https://github.com/apple/ml-sigmoid-attention} for the pretrained 7B softmax and sigmoid models released by \citet{ramapuramtheory}. We run both models on 1000 samples from a subset\footnote{https://huggingface.co/datasets/NeelNanda/pile-10k} of the Pile dataset \citep{gao2020pile} with the maximum sequence length of 256. We measure the sparsity score of each head as the average proportion of total attention weight assigned to the max-logit token. In pseudocode,
{\small \begin{verbatim}
    A = attention_matrix  # [layer, head, sample, query_seq_len, key_seq_len]
    max_attn_weight = A.max(axis=-1)
    total_attn_weight = A.sum(axis=-1)
    sparsity_score[L, H] = (max_attn_weight / total_attn_weight).mean()
\end{verbatim}}

In Figure \ref{fig:main-llm-pretrained}, we plot the distribution of sparsity scores for all heads in softmax and sigmoid models.

\subsection{Emerging imbalance in token influence on classifier's predictions}

\paragraph{Task format.} The task is sequence classification into $K$ labels. The vocabulary is split into $K$ disjoint groups $V^{(1)},\dots,V^{(K)}$. Each object with a label $k$ is a random sequence of tokens from the $k$-th group: $x_i = v^{(k_i)}_{j_{i1}},\dots,v^{(k_i)}_{j_{in}}, y_i=k$. \texttt{CLS} token is appended to the input. Knowledge of any of the input tokens is sufficient to achieve perfect in-distribution accuracy. The task is illustrated in Figure \ref{fig:cls_task_format} (left).

\begin{figure}[h]
  \centering
  \includegraphics[width=1.0\textwidth]{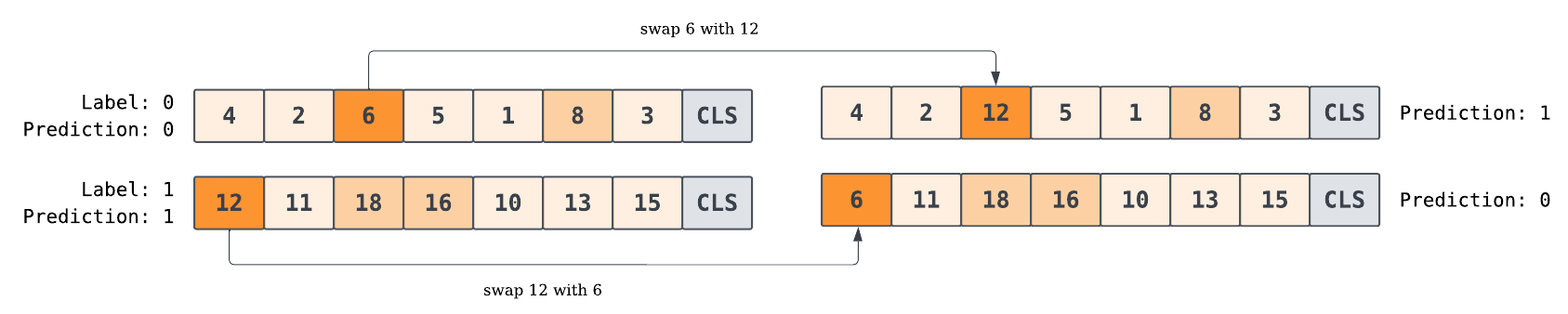}
  \caption{Task format in the classification experiment and the token swap procedure. Cell brightness illustrates the attention weight assigned to each token.}
  \label{fig:cls_task_format}
\end{figure}

\begin{table}[h]
  \centering
  \renewcommand{\arraystretch}{1.15}
  \setlength{\tabcolsep}{10pt}
  \begin{tabular}{l c l c}
    \hline
    \textbf{Hyperparameter} & \textbf{Value} & \textbf{Hyperparameter} & \textbf{Value} \\
    \hline
    Layers           & $\{{\bf1},2\}$ & Epochs         & 10 \\
    Hidden dim       & 128          & Train size     & 10{,}000 \\
    Feedforward dim  & 512          & Eval size      & 1{,}000 \\
    Vocab size       & $\{\bf{41},81,161\}$ & Sequence length & 11 \\
    Labels & 4     & Causal        & \texttt{false} \\
    Batch size       & 256          & Positional type & \texttt{no} \\
    \hline
    Heads            & $\{{\bf 1},2,4\}$ & Use MLP        & $\{\textbf{\texttt{true}},\texttt{false}\}$ \\
    Use LayerNorm    & $\{\textbf{\texttt{true}},\texttt{false}\}$ & Learning rate & $\bf\{10^{-2},10^{-3},10^{-4}\}$ \\
    Attention type   & all in Table \ref{table:attention-types} & Seed & $\bf\{0,1,2,3,4\}$ \\
    \hline
  \end{tabular}
  \vspace{3pt}
  \caption{Configuration and hyperparameters for the classification experiment. The bold values indicate parameters used for the setup in Figures \ref{fig:main-exp-classification} and \ref{fig:exp-classifcation-attentions}. Results for other setups are additionally provided in Appendix \ref{app:results-cls}}
  \label{tab:classification-config-sweep}
\end{table}

\paragraph{Hyperparameters.}

We use the same architecture as in the induction head experiment. However, we do not use causal masking or positional encodings as the input is effectively a set. We experiment with different numbers of layers and heads, as well as active and inactive MLPs and LayerNorm layers. Full details are provided in Table \ref{tab:classification-config-sweep}.

\paragraph{Metrics.}

We calculate the out-of-distribution flip rate, presented in Figure \ref{fig:main-exp-classification}, as follows. We repeatedly pick two random sequences with different labels from the evaluation set and run a forward pass on them, extracting the attention weights with the \texttt{CLS} token as a query. Then we swap the max-logit tokens for the head $(L, H)$ and check if the model changed the prediction. In pseudocode,
{\small \begin{verbatim}
    X1, X2, Y1, Y2 = pick_samples_with_different_labels()
    
    A1 = attention_matrix(X1)  # [layer, head, query_seq_len, key_seq_len]
    A2 = attention_matrix(X2)

    # obtaining the max-logit tokens excluding CLS itself
    ranking_1[L, H] = A1[L, H,-1,:-1].argsort(reverse=True)
    ranking_2[L, H] = A2[L, H,-1,:-1].argsort(reverse=True)

    # max-logit token from label 2 + others from label 1
    adversarial_1[L, H] = [ranking_2[L, H][0]] + ranking_1[L, H][1:]

    # max-logit token from label 1 + others from label 2
    adversarial_2[L, H] = [ranking_1[L, H][0]] + ranking_2[L, H][1:]

    prediction_adv_1[L, H] = model(adversarial_1[L, H])
    prediction_adv_2[L, H] = model(adversarial_2[L, H])

    # flipped if swapping one token changes the prediction
    flipped_1[L, H] = (prediction_adv_1[L, H] == Y2)
    flipped_2[L, H] = (prediction_adv_2[L, H] == Y1)
    flip_rate[L, H] = (flip_1 + flip_2) / 2
\end{verbatim}}

Figure \ref{fig:cls_task_format} illustrates this procedure. We report \emph{the maximum flip rate} over all heads and layers.

\section{Additional Experimental Results}

\subsection{Value-Softmax Model}

\begin{figure}[h]
  \centering
  \includegraphics[width=1.0\textwidth]{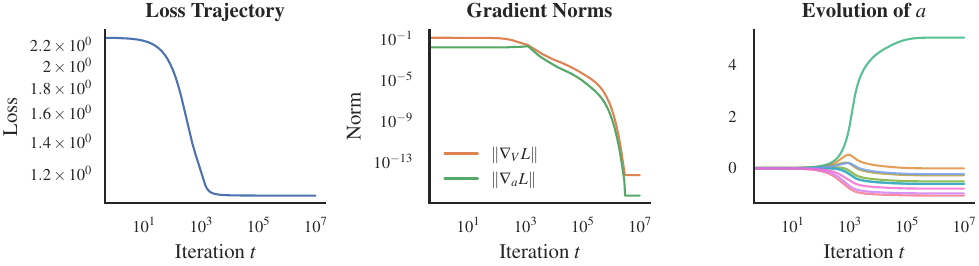}
  \caption{Training dynamics of Value-Softmax model with KL-divergence as loss function.}
  \label{fig:kl}
\end{figure}

\begin{figure}[h]
  \centering
  \includegraphics[width=1.0\textwidth]{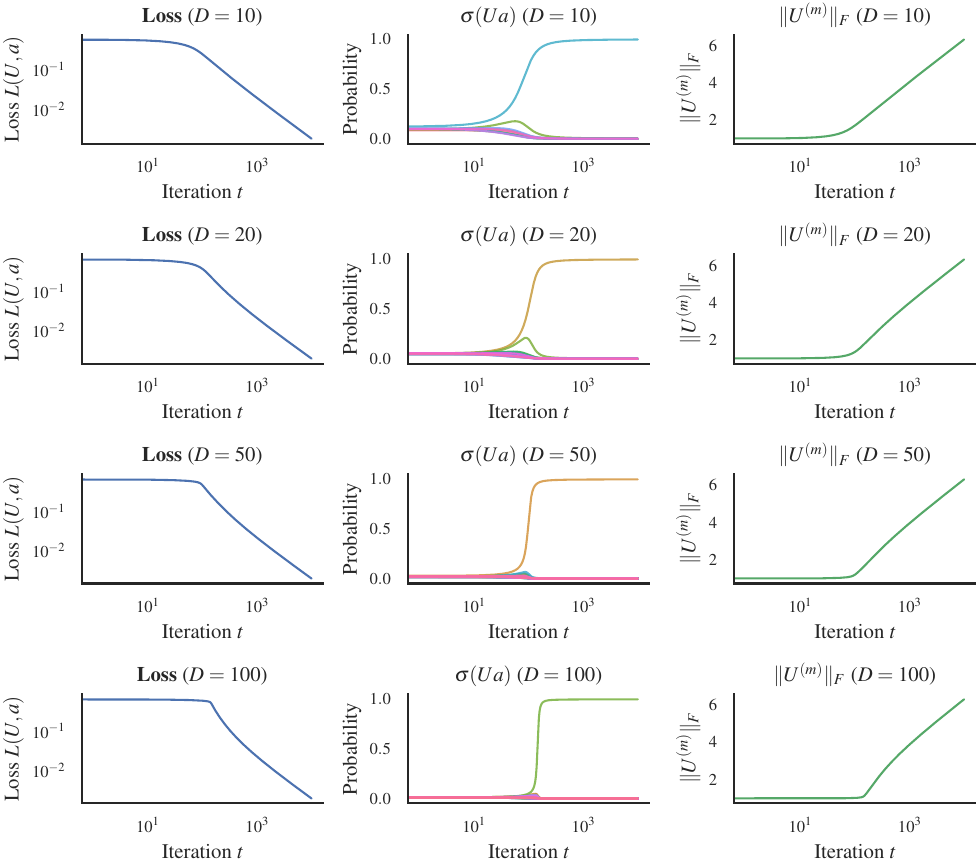}
  \caption{Training dynamics of tied Value-Softmax model ($U \sigma(Ua)$) with logistic loss for different dimensionality $D$. We denote the column of $U$ corresponding to the largest element of $a$ as $U^{(m)}$. In all cases, a sparse solution is found, and the norm of $U^{(m)}$ diverges.} 
  \label{fig:massive_activations}
\end{figure}

\begin{figure}[h]
  \centering
  \includegraphics[width=1.0\textwidth]{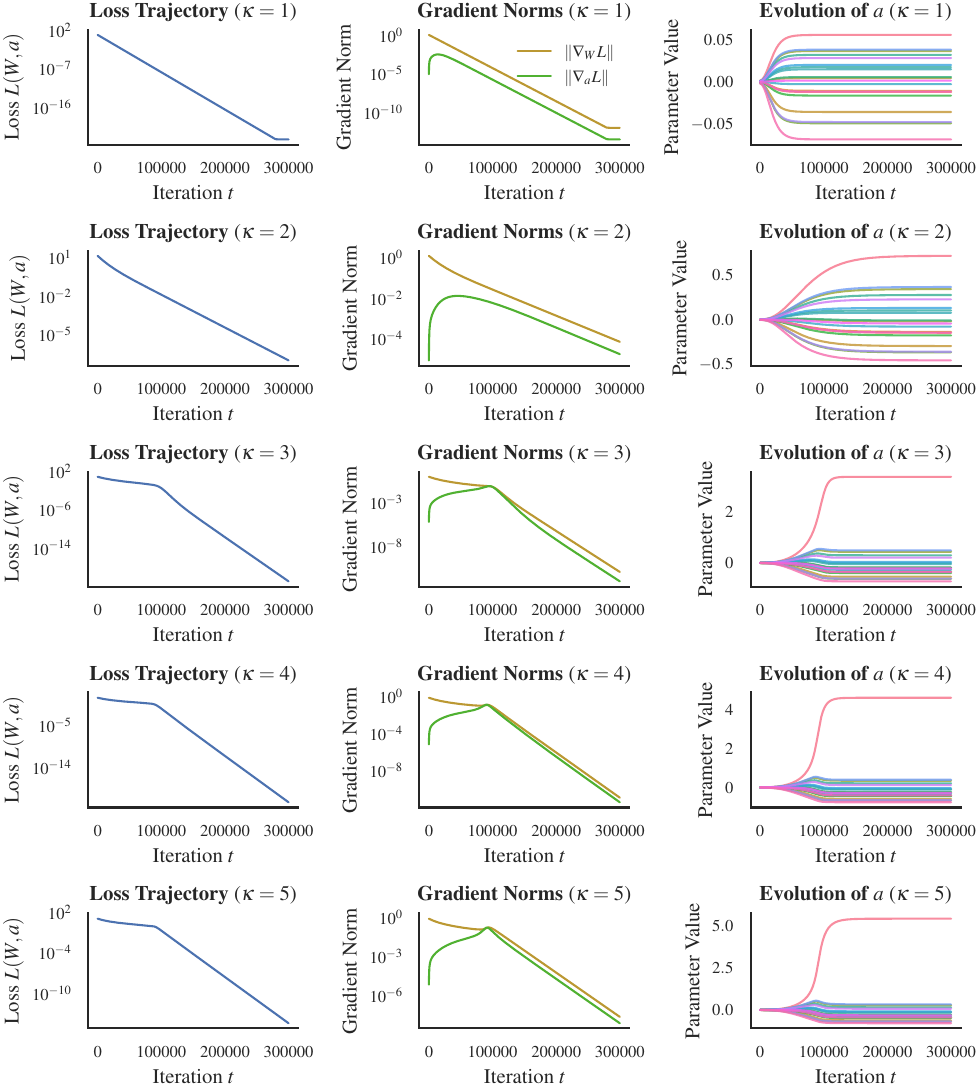}
  \caption{Training dynamics in regression problems with condition number $\kappa$ from 1 to 5. Higher condition number leads to slower optimization and stronger sparsity.}
  \label{fig:condition_numbers}
\end{figure}

\begin{figure}[t]
  \centering
  \begin{subfigure}[t]{0.46\textwidth}
    \centering
    \includegraphics[width=\textwidth]{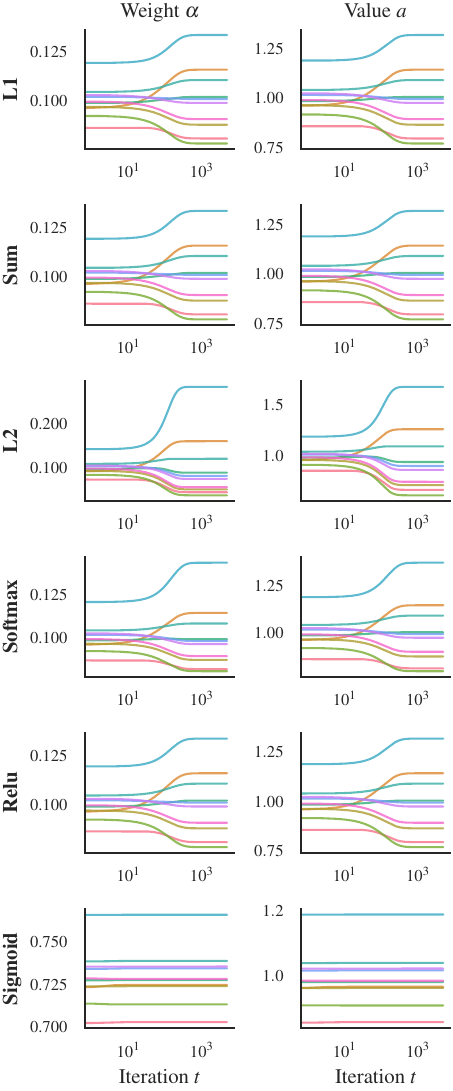}
    \caption{Square loss.}
  \end{subfigure}
  \hfill
  \begin{subfigure}[t]{0.46\textwidth}
    \centering
    \includegraphics[width=\textwidth]{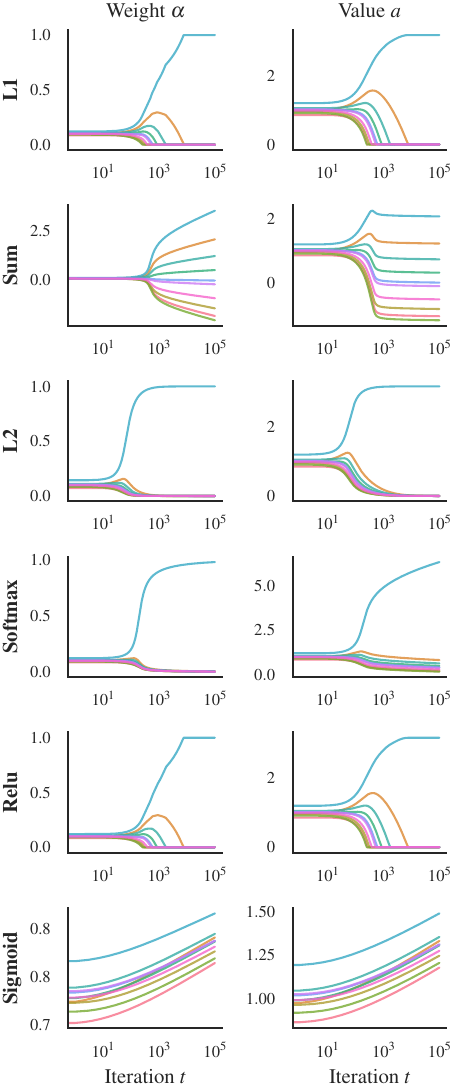}
    \caption{Logistic loss.}
  \end{subfigure}

  \caption{Training dynamics of value-softmax model for square and logistic losses and various activation functions. Note that square loss does not give rise to sparse solutions. Additionally, sum and sigmoid do not converge to sparse solutions for logistic loss as well.}
  \label{fig:toy-many-plots}
\end{figure}

\FloatBarrier
\subsection{Induction heads acting as attention sinks}
\label{app:results-induction}


\begin{figure*}[h]
  \centering

  \begin{subfigure}[t]{0.46\textwidth}
    \centering
    \includegraphics[width=\textwidth]{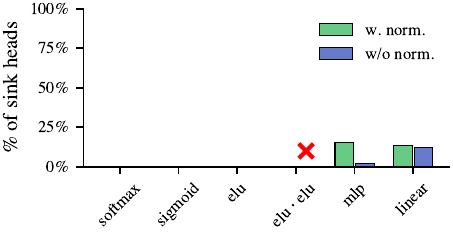}
    \caption{additive, $t_i = t_i'$}
  \end{subfigure}\hfill
  \begin{subfigure}[t]{0.46\textwidth}
    \centering
    \includegraphics[width=\textwidth]{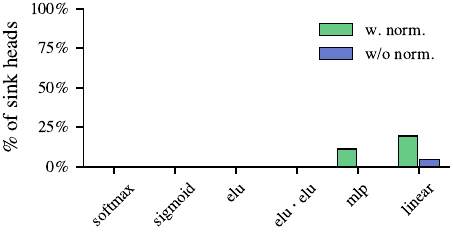}
    \caption{additive, $t_i \ne t_i'$}
  \end{subfigure}

  \vspace{2mm}

  \begin{subfigure}[t]{0.46\textwidth}
    \centering
    \includegraphics[width=\textwidth]{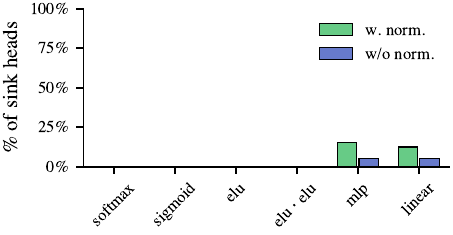}
    \caption{concatenated, $t_i = t_i'$}
  \end{subfigure}\hfill
  \begin{subfigure}[t]{0.46\textwidth}
    \centering
    \includegraphics[width=\textwidth]{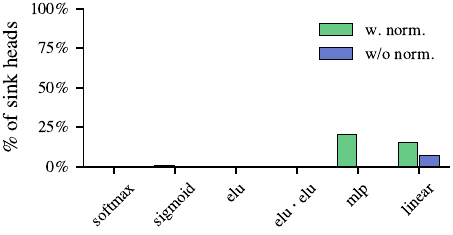}
    \caption{concatenated, $t_i \ne t_i'$}
  \end{subfigure}

  \vspace{2mm}

  \begin{subfigure}[t]{0.46\textwidth}
    \centering
    \includegraphics[width=\textwidth]{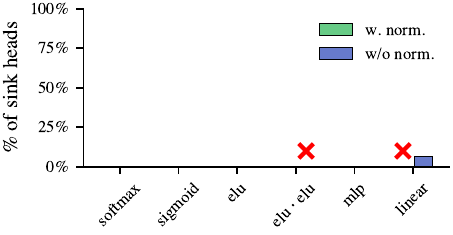}
    \caption{no, $t_i = t_i'$}
  \end{subfigure}\hfill
  \begin{subfigure}[t]{0.46\textwidth}
    \centering
    \includegraphics[width=\textwidth]{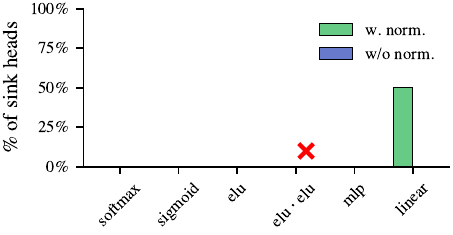}
    \caption{no, $t_i \ne t_i'$}
  \end{subfigure}

  \caption{Sink proportions in induction-trained Transformer with $\lambda = 0$. Each row represents a type of positional encodings. Setups that result in training failures are marked by red crosses. We observe that under this setup, most heads duplicate each other and do not form sinks.}
  \label{fig:sink-proportions-aux0}
\end{figure*}

\begin{figure*}[h]
  \centering

  \begin{subfigure}[t]{0.46\textwidth}
    \centering
    \includegraphics[width=\textwidth]{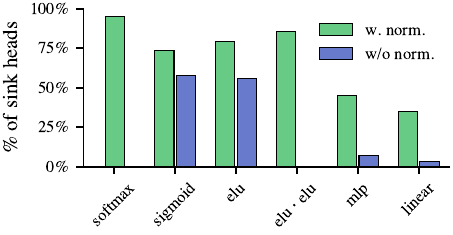}
    \caption{additive, $t_i = t_i'$}
  \end{subfigure}\hfill
  \begin{subfigure}[t]{0.46\textwidth}
    \centering
    \includegraphics[width=\textwidth]{figs/sink_proportions_aux0.2_shift_add.pdf}
    \caption{additive, $t_i \ne t_i'$}
  \end{subfigure}

  \vspace{2mm}

  \begin{subfigure}[t]{0.46\textwidth}
    \centering
    \includegraphics[width=\textwidth]{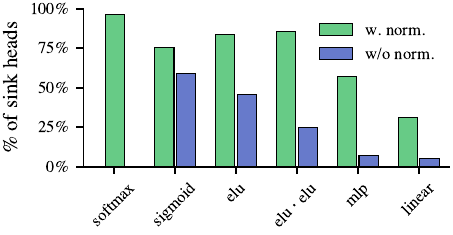}
    \caption{concatenated, $t_i = t_i'$}
  \end{subfigure}\hfill
  \begin{subfigure}[t]{0.46\textwidth}
    \centering
    \includegraphics[width=\textwidth]{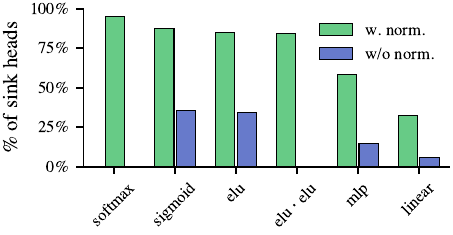}
    \caption{concatenated, $t_i \ne t_i'$}
  \end{subfigure}

  \vspace{2mm}

  \begin{subfigure}[t]{0.46\textwidth}
    \centering
    \includegraphics[width=\textwidth]{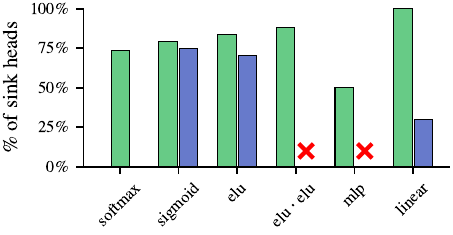}
    \caption{no, $t_i = t_i'$}
  \end{subfigure}\hfill
  \begin{subfigure}[t]{0.46\textwidth}
    \centering
    \includegraphics[width=\textwidth]{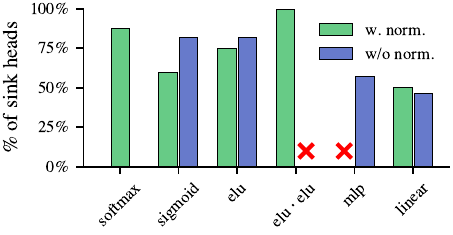}
    \caption{no, $t_i \ne t_i'$}
  \end{subfigure}

  \caption{Sink proportions in induction-trained Transformer with $\lambda = 0.2$. Each row represents a type of positional encodings. Setups that result in training failures are marked by red crosses. Interestingly, we observe that setups with no explicit positional encodings lead to an increased formation of attention sinks. Note that in this case the model has access to positional information through causal masking, so the task is still solvable.}
  \label{fig:sink-proportions-aux02}
\end{figure*}

\begin{figure*}[h]
  \centering

  \begin{subfigure}[t]{0.46\textwidth}
    \centering
    \includegraphics[width=\textwidth]{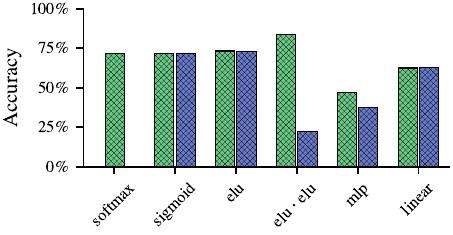}
    \caption{additive, $t_i = t_i'$}
  \end{subfigure}\hfill
  \begin{subfigure}[t]{0.46\textwidth}
    \centering
    \includegraphics[width=\textwidth]{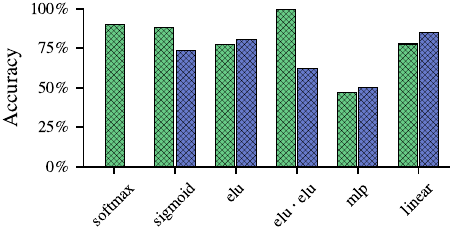}
    \caption{additive, $t_i \ne t_i'$}
  \end{subfigure}

  \vspace{2mm}

  \begin{subfigure}[t]{0.46\textwidth}
    \centering
    \includegraphics[width=\textwidth]{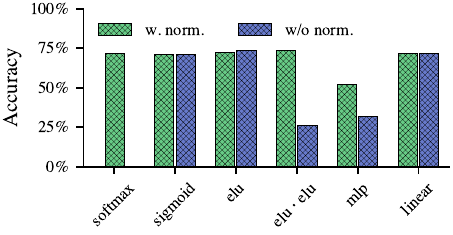}
    \caption{concatenated, $t_i = t_i'$}
  \end{subfigure}\hfill
  \begin{subfigure}[t]{0.46\textwidth}
    \centering
    \includegraphics[width=\textwidth]{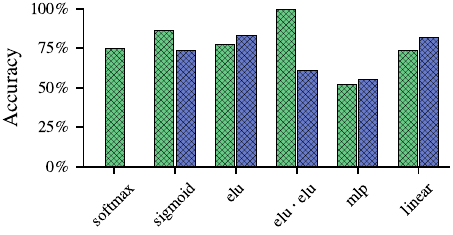}
    \caption{concatenated, $t_i \ne t_i'$}
  \end{subfigure}

  \vspace{2mm}

  \begin{subfigure}[t]{0.46\textwidth}
    \centering
    \includegraphics[width=\textwidth]{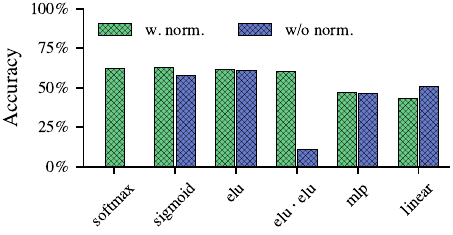}
    \caption{no, $t_i = t_i'$}
  \end{subfigure}\hfill
  \begin{subfigure}[t]{0.46\textwidth}
    \centering
    \includegraphics[width=\textwidth]{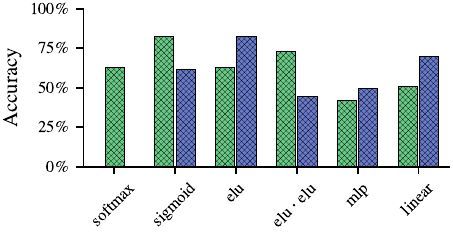}
    \caption{no, $t_i \ne t_i'$}
  \end{subfigure}

  \caption{Average induction accuracy for models trained with $\lambda = 0$. Each row represents a type of positional encodings.}
  \label{fig:accuracy-aux0}
\end{figure*}

\begin{figure*}[h]
  \centering

  \begin{subfigure}[t]{0.46\textwidth}
    \centering
    \includegraphics[width=\textwidth]{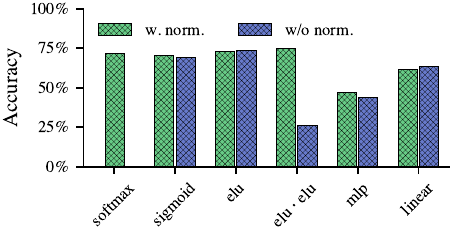}
    \caption{additive, $t_i = t_i'$}
  \end{subfigure}\hfill
  \begin{subfigure}[t]{0.46\textwidth}
    \centering
    \includegraphics[width=\textwidth]{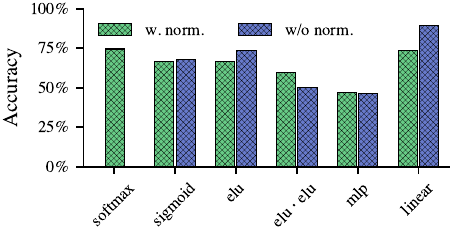}
    \caption{additive, $t_i \ne t_i'$}
  \end{subfigure}

  \vspace{2mm}

  \begin{subfigure}[t]{0.46\textwidth}
    \centering
    \includegraphics[width=\textwidth]{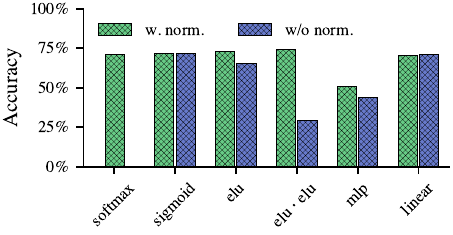}
    \caption{concatenated, $t_i = t_i'$}
  \end{subfigure}\hfill
  \begin{subfigure}[t]{0.46\textwidth}
    \centering
    \includegraphics[width=\textwidth]{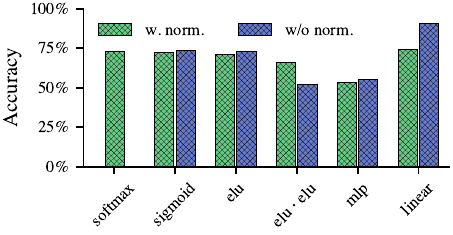}
    \caption{concatenated, $t_i \ne t_i'$}
  \end{subfigure}

  \vspace{2mm}

  \begin{subfigure}[t]{0.46\textwidth}
    \centering
    \includegraphics[width=\textwidth]{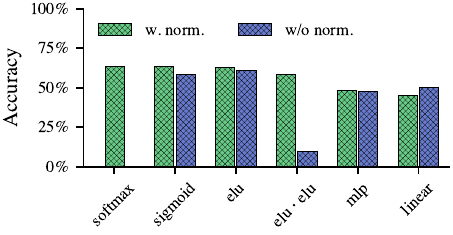}
    \caption{no, $t_i = t_i'$}
  \end{subfigure}\hfill
  \begin{subfigure}[t]{0.46\textwidth}
    \centering
    \includegraphics[width=\textwidth]{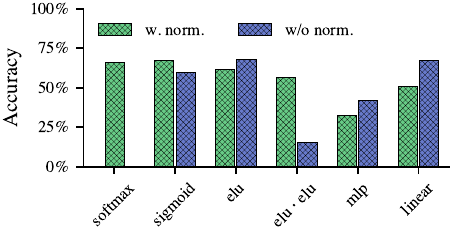}
    \caption{no, $t_i \ne t_i'$}
  \end{subfigure}

  \caption{Average induction accuracy for models trained with $\lambda = 0.2$. Each row represents a type of positional encodings.}
  \label{fig:accuracy-aux02}
\end{figure*}

\FloatBarrier
\subsection{Emerging imbalance in token influence on classifier's predictions}
\label{app:results-cls}


\begin{figure*}[h]
  \centering

  \begin{subfigure}[t]{0.46\textwidth}
    \centering
    \includegraphics[width=\textwidth]{figs/fliprate_vocab41_mlp_ln.pdf}
    \caption{MLP, LN}
  \end{subfigure}\hfill
  \begin{subfigure}[t]{0.46\textwidth}
    \centering
    \includegraphics[width=\textwidth]{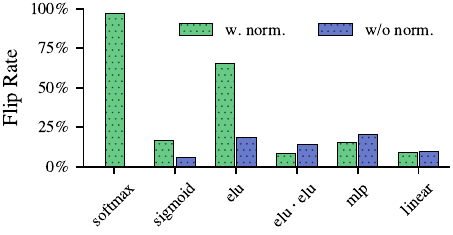}
    \caption{MLP, no LN}
  \end{subfigure}

  \vspace{2mm}

  \begin{subfigure}[t]{0.46\textwidth}
    \centering
    \includegraphics[width=\textwidth]{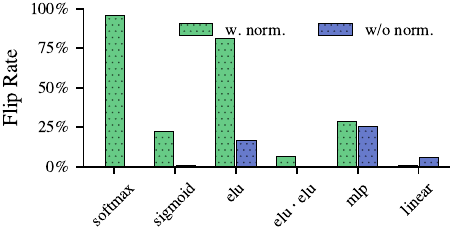}
    \caption{no MLP, LN}
  \end{subfigure}\hfill
  \begin{subfigure}[t]{0.46\textwidth}
    \centering
    \includegraphics[width=\textwidth]{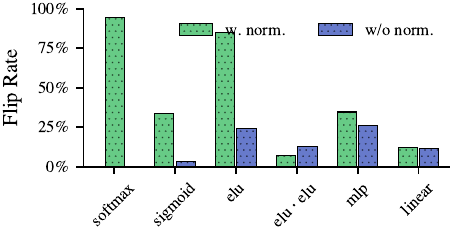}
    \caption{no MLP, no LN}
  \end{subfigure}

  \caption{Flip-rate plots for vocabulary size $|\mathcal{V}|=41$. Rows: MLP vs. no-MLP. Columns: LayerNorm vs. no-LayerNorm.}
  \label{fig:fliprate-vocab41}
\end{figure*}

\begin{figure*}[h]
  \centering

  \begin{subfigure}[t]{0.46\textwidth}
    \centering
    \includegraphics[width=\textwidth]{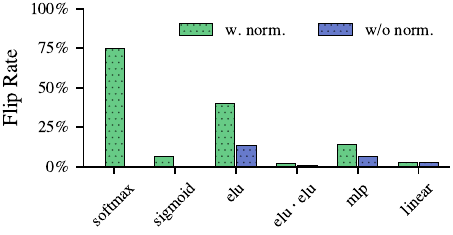}
    \caption{MLP, LN}
  \end{subfigure}\hfill
  \begin{subfigure}[t]{0.46\textwidth}
    \centering
    \includegraphics[width=\textwidth]{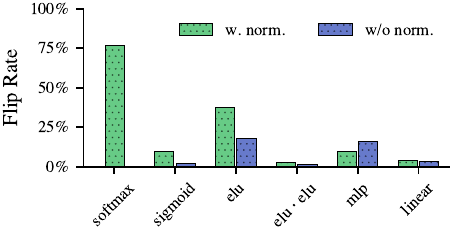}
    \caption{MLP, no LN}
  \end{subfigure}

  \vspace{2mm}

  \begin{subfigure}[t]{0.46\textwidth}
    \centering
    \includegraphics[width=\textwidth]{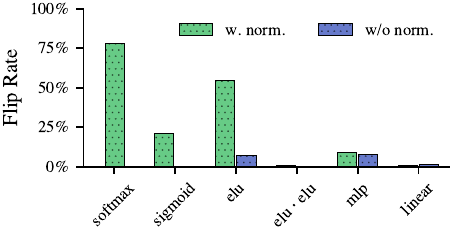}
    \caption{no MLP, LN}
  \end{subfigure}\hfill
  \begin{subfigure}[t]{0.46\textwidth}
    \centering
    \includegraphics[width=\textwidth]{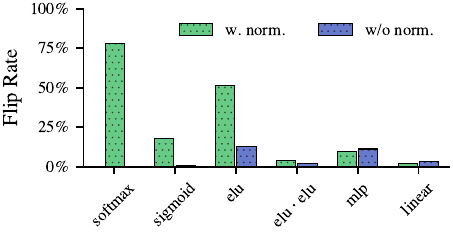}
    \caption{no MLP, no LN}
  \end{subfigure}

  \caption{Flip-rate plots for vocabulary size $|\mathcal{V}|=81$. Rows: MLP vs. no-MLP. Columns: LayerNorm vs. no-LayerNorm.}
  \label{fig:fliprate-vocab81}
\end{figure*}

\begin{figure*}[h]
  \centering

  \begin{subfigure}[t]{0.46\textwidth}
    \centering
    \includegraphics[width=\textwidth]{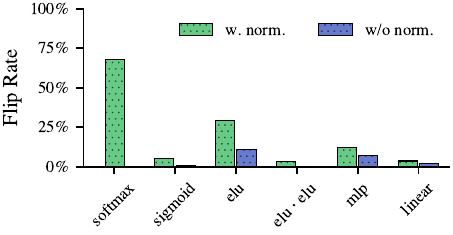}
    \caption{MLP, LN}
  \end{subfigure}\hfill
  \begin{subfigure}[t]{0.46\textwidth}
    \centering
    \includegraphics[width=\textwidth]{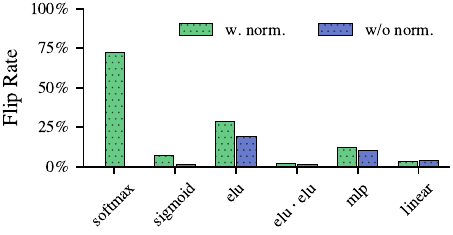}
    \caption{MLP, no LN}
  \end{subfigure}

  \vspace{2mm}

  \begin{subfigure}[t]{0.46\textwidth}
    \centering
    \includegraphics[width=\textwidth]{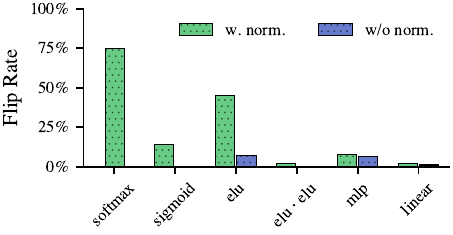}
    \caption{no MLP, LN}
  \end{subfigure}\hfill
  \begin{subfigure}[t]{0.46\textwidth}
    \centering
    \includegraphics[width=\textwidth]{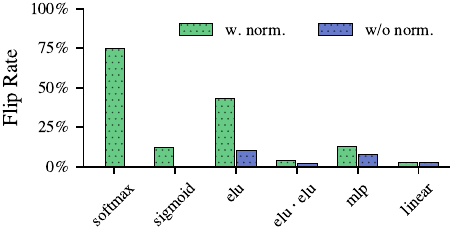}
    \caption{no MLP, no LN}
  \end{subfigure}

  \caption{Flip-rate plots for vocabulary size $|\mathcal{V}|=161$. Rows: MLP vs. no-MLP. Columns: LayerNorm vs. no-LayerNorm.}
  \label{fig:fliprate-vocab161}
\end{figure*}


\begin{figure*}[t]
  \centering

  \begin{subfigure}[t]{0.46\textwidth}
    \centering
    \includegraphics[width=\textwidth]{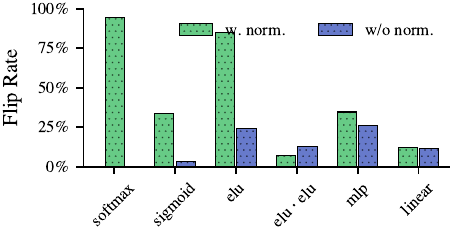}
    \caption{$H{=}1$, $L{=}1$}
  \end{subfigure}\hfill
  \begin{subfigure}[t]{0.46\textwidth}
    \centering
    \includegraphics[width=\textwidth]{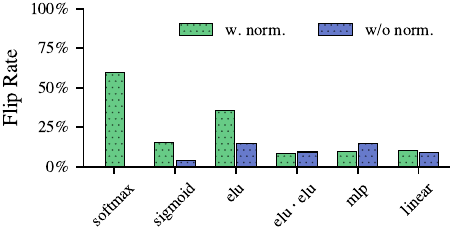}
    \caption{$H{=}1$, $L{=}2$}
  \end{subfigure}

  \vspace{2mm}

  \begin{subfigure}[t]{0.46\textwidth}
    \centering
    \includegraphics[width=\textwidth]{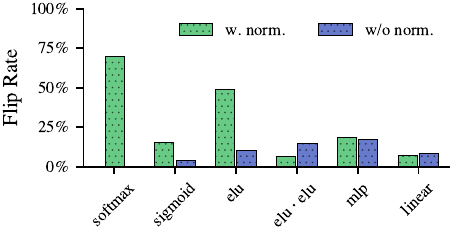}
    \caption{$H{=}2$, $L{=}1$}
  \end{subfigure}\hfill
  \begin{subfigure}[t]{0.46\textwidth}
    \centering
    \includegraphics[width=\textwidth]{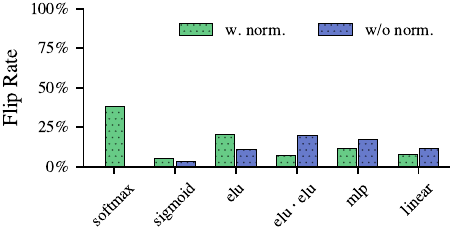}
    \caption{$H{=}2$, $L{=}2$}
  \end{subfigure}

  \vspace{2mm}

  \begin{subfigure}[t]{0.46\textwidth}
    \centering
    \includegraphics[width=\textwidth]{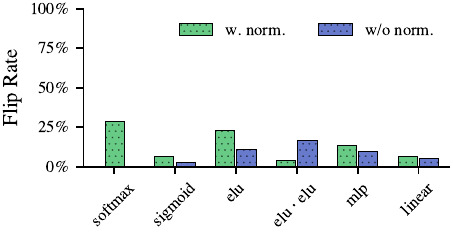}
    \caption{$H{=}4$, $L{=}1$}
  \end{subfigure}\hfill
  \begin{subfigure}[t]{0.46\textwidth}
    \centering
    \includegraphics[width=\textwidth]{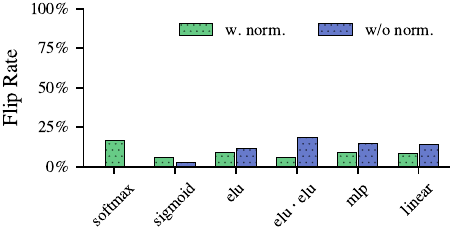}
    \caption{$H{=}4$, $L{=}2$}
  \end{subfigure}

  \caption{Flip-rate plots for $|\mathcal{V}|=41$ with no MLP and no LayerNorm. Rows: number of heads $H\in\{1,2,4\}$. Columns: number of layers $L\in\{1,2\}$. Increasing the number of heads or layers leads to the diminishing flip rate.}
  \label{fig:fliprate-vocab41-3x2-heads-layers-nomlp-noln}
\end{figure*}

\end{document}